\newcommand{\cl}[1]{\ensuremath {\sf #1}}
\newcommand{\set}[1]{\{#1\}}  
\newcommand{\sharpP}{\cl{\# P}}
\newtheorem{theorem}{Theorem}[section]          	% Theorem environment.
\newaliascnt{lemma}{theorem}				% 1 alias counter
\newtheorem{lemma}[lemma]{Lemma}              	% Lemma environment.
\newaliascnt{conjecture}{theorem}			% 1 alias counter
\newaliascnt{remark}{theorem}				% 1 alias counter
\newaliascnt{corollary}{theorem}			% 1 alias counter
\newtheorem{corollary}[corollary]{Corollary}      % Corollary environment.
\newaliascnt{definition}{theorem}			% 1 alias counter
\newtheorem{definition}[definition]{Definition}    % Definition environment.
\newaliascnt{proposition}{theorem}			% 1 alias counter
\newtheorem{proposition}[proposition]{Proposition}  % proposition environment.
\newaliascnt{example}{theorem}			% 1 alias counter
\newcommand{\cut}[1]{\relax}
\newcommand{\SDD}{\alpha}
\newcommand{\vtree}{\bf{v}}
\newcommand{\desc}{\operatorname{Vars}}
\newcommand{\shell}{\operatorname{Shell}}
\newcommand{\parent}{\operatorname{Parent}}
\newcommand{\row}{\operatorname{Row}}
\newcommand{\col}{\operatorname{Col}}
\newcommand{\lab}{\operatorname{Sdds}}
\newcommand{\stack}{\operatorname{Chain}}
\newcommand{\CC}{\operatorname{CC}}
\newcommand{\DNNF}{\mathcal{D}}
\title{New Limits for Knowledge Compilation and Applications to Exact Model Counting} 
\author{ {\bf Paul Beame\thanks{\ \ Research supported by NSF grant CCF-1217099.}} \\  
Computer Science and Engineering\\
University of Washington\\
Seattle, WA 98195\\
{\tt beame@cs.washington.edu}
\and 
{\bf Vincent Liew}$^*$\\ 
Computer Science and Engineering\\
University of Washington\\
Seattle, WA 98195\\
{\tt vliew@cs.washington.edu}
} 
\begin{document} 
 
\maketitle 
 
%The Abstract paragraph should be indented 0.25 inch (1.5 picas) on
%both left and right-hand margins.  Use 10~point type, with a vertical
%spacing of 11~points.  {\bf Abstract} must be centered, bold, and in
%point size 12. Two line spaces precede the Abstract. The Abstract must
%be limited to one paragraph.

\begin{abstract}
We show new limits on the efficiency of using current
techniques to make exact probabilistic inference
for large classes of natural problems.
In particular we show new lower bounds on knowledge compilation to SDD
and DNNF forms.  
%DNNF representations generalize current knowledge
%representations used for these problems, while SDD representations are 
%an important recent subclass of DNNF representations whose use is becoming
%increasingly widespread. 
We give strong lower bounds on the complexity of SDD
representations by relating SDD size to best-partition 
communication complexity.
We use this relationship to prove exponential lower bounds on the SDD size
for representing a large class of problems that occur naturally as
queries over probabilistic databases.
A consequence is that for representing unions of conjunctive queries,
SDDs are not qualitatively more concise than OBDDs.
We also derive simple examples for which SDDs must be exponentially less
concise than FBDDs.
Finally, we derive exponential lower bounds on the sizes of DNNF
representations using a new quasipolynomial simulation of DNNFs by
nondeterministic FBDDs.
\end{abstract}

\section{Introduction}
Weighted model counting is a fundamental problem in probabilistic inference
that captures the computation of probabilities of complex predicates over
independent random events (Boolean variables).
Although the problem is \sharpP-hard in general, 
there are a number of practical algorithms for model counting based on
DPLL algorithms and on knowledge compilation techniques.
The knowledge
compilation approach, though more space intensive, can be much more convenient
since it builds a representation for an input predicate independent of its
weights that allows the count to evaluated easily given a particular choice
of weights; that representation also can be re-used to analyze more
complicated predicates.
Moreover, with only a constant-factor increase in time, the 
methods using DPLL algorithms can be easily extended to be knowledge compilation
algorithms~\cite{HuangDJAIR07}.
(See~\cite{GomesSS09} for a survey.)

The representation to be used for knowledge compilation is an important key to
the utility of these methods in practice; the best methods are based on
restricted classes of circuits and on decision diagrams.  All of the ones
considered to date can be seen as natural sub-classes of the class of
{\em Decomposable Negation Normal Form (DNNF)} formulas/circuits introduced
in~\cite{Darwiche01JACM}, though it is not known how to do model counting
efficiently
for the full class of DNNF formulas/circuits.  One sub-class for which model
counting is efficient given the representation is that of {\em d-DNNF}
formulas, though there is no efficient algorithm known to recognize whether a
DNNF formula is d-DNNF.   

A special case of d-DNNF formulas (with a minor change of syntax)
that is easy to recognize is that of {\em decision-DNNF} formulas.  
This class of representations captures all of the
practical model counting algorithms discussed in~\cite{GomesSS09} including
those based on DPLL algorithms.
Decision-DNNFs include
{\em Ordered Binary Decision Diagrams (OBDDs)}, which are canonical and
have been highly effective representations for verification,
and also {\em Free} BDDs (FBDDs),
which are also known as read-once branching programs.  
Using a quasi-polynomial simulation of
decision-DNNFs by FBDDs,
\cite{Beame+13-uai,Beame+14-icdt} showed
that the best decision-DNNF representations must be
exponential even for many very simple 2-DNF predicates that
arise in probabilistic databases.

Recently, \cite{DBLP:conf/ijcai/Darwiche11} introduced another
subclass of
d-DNNF formulas called {\em Sentential Decision Diagrams (SDDs)}.  This
class is strictly more general than OBDDs and  (in its basic form) is
similarly canonical.  (OBDDs use a fixed ordering of variables, while
SDDs use a fixed binary tree of variables, known as a {\em vtree}.)
There has been
substantial development and growing application of SDDs to knowledge
representation problems, including a recently released SDD software
package~\cite{SDDpackage}.   Indeed, SDDs hold potential to be more concise
than OBDDs.
\cite{DBLP:conf/aaai/BroeckD15} showed that {\em compressing} an SDD
with a fixed vtree so that it is canonical can lead to an exponential blow-up
in size, but much regarding the complexity of SDD representations has remained
open.

In this paper we show the limitations both of general DNNFs and especially
of SDDs.  We show that
the simulation of decision-DNNFs by FBDDs from~\cite{Beame+13-uai}
can be extended to yield a simulation of general DNNFs by OR-FBDDs, the
nondeterministic extension of FBDDs, from which we can derive exponential
lower bounds for DNNF
representations of some simple functions. 
This latter simulation, as well as that of~\cite{Beame+13-uai},
is tight, since~\cite{DBLP:journals/corr/Razgon15a} (see also \cite{DBLP:conf/kr/Razgon14}) shows a quasipolynomial separation between DNNF and OR-FBDD size
using parameterized complexity. 
%(Though these functions are simple, they are not particularly natural in the
%context of probabilistic inference.)

For SDDs we obtain much stronger results.  
In particular, we relate the SDD size required to represent predicate $f$
to the "best-case partition" communication
complexity~\cite{kn97} of $f$.  Using this, together with reductions to the
communication complexity of disjointness (set intersection), we derive the
following results:\\
(1) There are simple predicates given by 2-DNF formulas for which FBDD size is polynomial but for 
which SDD size must be exponential.\\
(2) For a natural, widely-studied class of database queries 
known as {\em Unions of Conjunctive Queries (UCQ)}, the SDD size is linear
iff the OBDD size is linear and is exponential otherwise (which corresponds
to a query that contains an {\em inversion} \cite{DBLP:journals/mst/JhaS13}).\\
(3) Similar lower bounds apply to the dual of UCQ, which consists of universal, positive queries.

To prove our SDD results, we show that for any predicate $f$ given by
an SDD of size $S$, using its associated vtree
we can partition the variables of $f$ between two players, Alice and Bob,
in a nearly balanced way so that they only need to send $\log^2 S$ bits of
communication to compute $f$.  The characterization goes through an 
intermediate step involving {\em unambiguous} communication protocols
and a clever deterministic simulation of such protocols from~\cite{yannakakis:extension}.

\begin{sloppypar}
\paragraph{Related work:}  
The quasi-polynomial simulation of DNNFs by OR-FBDDs that we give was also shown
independently in~\cite{DBLP:conf/cp/Razgon15}. 
Beyond the lower bounds for
decision-DNNFs in \cite{Beame+13-uai,Beame+14-icdt} which
give related analyses for decision-DNNFs, the
work of \cite{DBLP:conf/aaai/PipatsrisawatD10} on structured DNNFs
is particularly relevant to this paper\footnote{We thank the conference reviewers
for bringing this work to our attention.}.
\cite{DBLP:conf/aaai/PipatsrisawatD10} show how sizes of
what they term {\em (deterministic) $\bf X$-decompositions} can yield lower
bounds on the sizes of {\em structured (deterministic)} DNNFs, which include
SDDs as a special case. 
\cite{pipatsrisawat:dissertation} contains the full details of how this can be
applied to prove lower bounds for specific predicates.  These bounds are
actually equivalent to lower bounds exponential in the best-partition
nondeterministic (respectively, unambiguous) communication complexity of the
given predicates.  Our paper derives this lower bound for SDDs directly but,
more importantly, provides the connection to best-partition {\em deterministic}
communication complexity,
which allows us to have a much wider range of application; this strengthening
is necessary for our applications.
Finally, we note that \cite{DBLP:conf/kr/Razgon14} showed that SDDs can be
powerful by finding examples where OBDDs using any order are quasipolynomially
less concise than SDDs.
\end{sloppypar}

\paragraph{Roadmap:}
We give the background and some formal definitions including some generalization
required for this work in Section~\ref{background}. 
We prove our characterization of SDDs in terms of best-partition communication complexity in Section~\ref{sdd-lower} and derive the resulting bounds for SDDs
for natural predicates in Section~\ref{applications}.
We describe the simulation of DNNFs by OR-FBDDs, and its consequences, in
Section~\ref{dnnf-simulation}.

\section{Background and Definitions}
\label{background}

We first give some basic definitions of DNNFs and decision diagrams.

\begin{definition}
A \emph{Negation Normal Form (NNF)} circuit is a Boolean circuit 
with $\lnot$ gates, which may only be applied to inputs, and $\lor$
and $\land$ gates.  Further, it is {\em Decomposable (DNNF)} iff the
children of each $\land$ gate are reachable from disjoint sets of input
variables. 
(Following convention, we call this circuit a ``DNNF formula'',
though it is not a Boolean formula in the usual sense of circuit complexity.)
A DNNF formula is \emph{deterministic (d-DNNF)} iff the functions computed at
the children of each $\lor$ gate are not simultaneously satisfiable.
\end{definition}

\begin{definition}
A \emph{Free Binary Decision Diagram (FBDD)} is a directed acyclic graph 
with a single source (the root)
and two specified sink nodes, one labeled 0 and the other 1.  Every non-sink
node is labeled by a Boolean variable and has two out-edges, one labeled 0
and the other 1.  No path from the root to either sink is labeled by the 
same variable more than once.  It is an \emph{OBDD} if the order of variable
labels is the same on every path.  The Boolean function computed by an FBDD is
1 on input $\bf a$ iff there is a path from the root to the sink labeled 1 so
that for every node label $X_i$ on the path, ${\bf a}_i$ is the label of the
out-edge taken by the path.
An \emph{OR-FBDD} is an FBDD augmented with additional nodes of
arbitrary fan-out labeled $\lor$.  The function value for the OR-FBDD follows
the same definition as for FBDDs; the $\lor$-nodes simply make more than one
path possible for a given input. (See~\cite{wegener2000book}.)
\end{definition}

We now define sentential decision diagrams as well as a small generalization
that we will find useful.

\begin{definition}
For a set ${\bf X}$, let $\top: \{0,1\}^{\bf X} \rightarrow \{0,1\}$ and
$\bot:\{0,1\}^{\bf X} \rightarrow \{0,1\}$  denote the constant $1$ function
and constant $0$ function, respectively.
\end{definition}

\begin{definition}
We say that a set of Boolean functions $\{p_1, p_2, \ldots,p_\ell\}$, where each
$p_i$ has domain $\set{0,1}^{\bf X}$, is \emph{disjoint} if for each
$i \not= j$, $p_i \wedge p_j = \bot$. We call $\{p_1, p_2, \ldots,p_\ell\}$
a \emph{partition} if it is disjoint and $\bigvee_{i=1}^{\ell} p_i = \top$.
\end{definition}

%\begin{definition}
%Consider a Boolean function $f(\bf{X},\bf{Y})$ with non-overlapping variables $\bf{X}$ and $\bf{Y}$. If $f = (p_1({\bf X}) \wedge s_1({\bf Y})) \vee \ldots \vee (p_\ell({\bf X}) \wedge s_\ell({\bf Y}))$, then $\{(p_1,s_1),\ldots,(p_\ell,s_\ell)\}$ is called an $({\bf X}, {\bf Y})$-decomposition of $f$. We call each $p_i$ a \emph{prime} and each $s_i$ a \emph{sub}.
%\end{definition}

%\begin{definition}
%Let $\SDD = \{(p_1,s_1),\ldots,(p_\ell,s_\ell)\}$ be an $(\bf{X},\bf{Y})$-decomposition of a function $f$ such that $p_i \wedge p_j = false$ for $i \not = j$. Then $\SDD$ is called an \emph{${\bf X}$-partition} of $f$ iff its primes form a partition.
%\end{definition}

\begin{definition}
A \emph{vtree} for variables $\bf{X}$ is a full binary tree whose leaves are
in one-to-one correspondence with the variables in $\bf{X}$.
\end{definition}

We define {\em Sentential Decision Diagrams (SDDs)} together with the
Boolean functions they represent and use
$\langle . \rangle$ to denote the mapping from SDDs into Boolean
functions.  (This notation is extended to sets of SDDs yielding sets of 
Boolean functions.)
At the same time, we also define a directed acyclic graph (DAG) representation
of the SDD.
%Given a set of SDDs $\leaves = \{\SDD_1,\ldots,\SDD_k \}$ let
%$\langle \leaves \rangle =
%\{\langle \SDD_1 \rangle,\ldots, \langle\SDD_k \rangle \}$.

\begin{definition}
$\SDD$ is an {\em SDD that respects vtree} $\vtree$ rooted at $v$ iff:
\begin{itemize}
\item $\SDD = \top$ or $\SDD = \bot$.   
\\ Semantics: $\langle \top \rangle = \top$ and $\langle \bot \rangle = \bot$.
\\$G(\SDD)$ consists of a single leaf node labeled with $\langle \SDD\rangle$.

\item $\SDD = X$ or $\SDD = \neg X$ and $v$ is a leaf with variable $X$.
\\ Semantics: $\langle X \rangle = X$ and $\langle \neg X \rangle = \neg X$
\\$G(\SDD)$ consists of a single leaf node labeled with $\langle \SDD\rangle$.

\item $\SDD = \{(p_1,s_1),\ldots,(p_\ell,s_\ell)\}$, $v$ is an internal vertex
with children $v_L$ and $v_R$, $p_1,\ldots,p_\ell$ are SDDs that respect the
subtree rooted at $v_L$,
$s_1,\ldots,s_\ell$ are SDDs that respect the subtree rooted at $v_R$, and
$\langle p_1 \rangle, \ldots, \langle p_\ell \rangle$ is a partition.
\\ Semantics: $\langle \SDD \rangle = \bigvee_{i=1}^n \big(\langle p_i \rangle \wedge \langle s_i \rangle\big)$
\\$G(\SDD)$ has a circle node for $\SDD$ labeled $v$ with $\ell$ child box
nodes labeled by the pairs $(p_i,s_i)$.  
A box node labeled $(p_i,s_i)$ has a left child that is 
the root of $G(p_i)$ and and a right child that is the root of $G(s_i)$. The
rest of $G(\SDD)$ is the (non-disjoint) union of graphs
$G(p_1),\ldots,G(p_\ell)$ and $G(s_1),\ldots,G(s_\ell)$ with common sub-DAGs
merged.  (See Figure~\ref{fig:SDD}.)
\end{itemize}
Each circle node $\SDD'$ in $G(\SDD)$ itself represents an SDD that respects
a subtree of $\vtree$ rooted at some vertex $v'$ of $\vtree$; 
We say that $\SDD'$ is {\em in} $\SDD$ and use $\lab(v',\SDD)$ to denote the
collection of $\SDD'$ in $\SDD$ that respect the subtree rooted at $v'$.
The \emph{size} of an SDD $\SDD$ is the number of nodes in
$G(\SDD)$.
\end{definition}

Circle nodes in $G(\SDD)$ may be interpreted as OR gates and paired box nodes
may be interpreted as AND gates. In the rest of this paper, we will view
SDDs as a class of Boolean circuit.
The vtree property and partition property of SDDs together ensure that this
resulting circuit is a d-DNNF.

\begin{figure}[t]
  \centering 
\includegraphics[scale = 0.6]{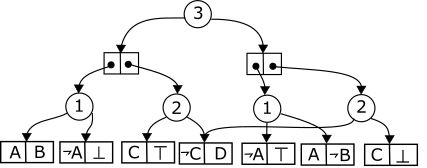}\hfil \includegraphics[scale=0.30]{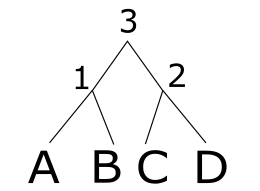}
%\vspace{-1ex}
\caption{An SDD with its associated vtree that computes the formula 
$(A \wedge B \wedge C) \vee (\neg C \wedge D)$}
\label{fig:SDD}
%\vspace{-3.1ex}
\end{figure}
%An SDD may be written as in Figure~\ref{fig:SDD} with the
%following conventions.
%We represent the SDD $\SDD = \{(p_1,s_1),\ldots,(p_\ell,s_\ell)\}$ by a circle
%with outgoing edges pointing to each element of the set.
%Each element $(p_i,s_i) \in \alpha$ is represented by a paired box where the
%%left box points to the sub-SDD $p_i$ and the right box points to the
%sub-pruned-SDD $s_i$.
%We define $G(\SDD)$ to be the underlying DAG for $\SDD$ where each circle node
%representing a sub-pruned-SDD $\SDD'$ is labeled by some injective function
%$\lab(\SDD')$.

We define a small generalization of vtrees which will be useful for describing
SDDs with respect to a partial assignment of variables.

\begin{definition}
A \emph{pruned vtree} for variables $\bf{X}$ is a full binary tree whose
leaves are either marked \emph{stub} or by a variable in $\bf{X}$, and
whose leaves marked by variables are in one-to-one correspondence with the
variables in $\bf{X}$.  
\end{definition}

We generalize SDDs so that they can respect pruned vtrees.
The definition is almost identical to that for regular SDDs so we
only point out the differences.

\begin{definition}
The definition of a {\em pruned SDD $\SDD$ respecting a pruned vtree} $\vtree$,
its semantics, and its graph $G(\alpha)$, are
identical to those of an SDD except that \\[-3ex]
\begin{itemize}
\item if the root vertex $v$ of $\vtree$ is a stub then $\langle \SDD \rangle$
 must be $\bot$
or $\top$, and
\item if the root vertex $v$ of $\vtree$ is internal then 
we only require that $\langle p_1 \rangle, \ldots, \langle p_\ell \rangle$ are
disjoint but not necessarily that they form a partition.
\end{itemize}
%$\SDD$ is a pruned SDD that respects pruned vtree $\vtree$ iff:
%\begin{itemize}
%%%\item $\SDD = \top$ or $\SDD = \bot$. 
%\\ Semantics: $\langle \top \rangle = \top$ and $\langle \bot \rangle = \bot$.
%
%\item $\SDD = X$ or $\SDD = \neg X$ and $v$ is a leaf with variable $X$.
%\\ Semantics: $\langle X \rangle = X$ and $\langle \neg X \rangle = \neg X$
%
%\item $\SDD = \{(p_1,s_1),\ldots,(p_\ell,s_\ell)\}$, $v$ is internal, $p_1,\ldots,p_\ell$ are pruned SDDs that respect subtrees of $v^l$,
%$s_1,\ldots,s_\ell$ are pruned SDDs that respect subtrees of $v^r$,
%and $\langle p_1 \rangle, \ldots, \langle p_\ell \rangle$ are disjoint.
%\\ Semantics: $\langle \SDD \rangle = \bigvee_{i=1}^n \big(\langle p_i \rangle \wedge \langle s_i \rangle\big)$
%\end{itemize}
%
%For a sub-pruned-SDD $\SDD'$ of $\SDD$, let $\lab(\SDD')$ denote the pruned
%vtree node that $\SDD'$ respects. 
\end{definition}

We now sketch a very brief overview of the communication complexity we
will need.
Many more details may be found in~\cite{kn97}.
Given a Boolean function $f$ on $\set{0,1}^{\bf X}\times\set{0,1}^{\bf Y}$,
one can define two-party protocols in which two players, Alice, who receives 
$x\in \set{0,1}^{\bf X}$ and Bob, who receives $y\in \set{0,1}^{\bf Y}$ 
exchange a sequence of messages $m_1,\ldots, m_C=f(x,y)\in \set{0,1}$ to
compute $f$.
(After each bit, the player to send the next bit must be determined from
previous messages.)
The {\em (deterministic) communication complexity} of $f$,
$CC(f({\bf X},{\bf Y}))$,
is the minimum value $C$ over all protocols computing $f$
such that all message sequences are of length at most $C$.
The \emph{one-way deterministic communication complexity} of $f$,
$CC_{{\bf X} \rightarrow {\bf Y}}(f({\bf X},{\bf Y}))$ is the
minimum value of $C$ over all protocols where Alice may send messages
to Bob, but Bob cannot send messages to Alice.

For nondeterministic protocols, Alice simply guesses a string based on her
input $x$ and sends the resulting message $m$ to Bob, who uses $m$ together with
$y$ to verify whether or not $f(x,y)=1$.
The communication complexity in this case is the minimum $|m|$ over all
protocols.  Such a protocol is {\em unambiguous} iff for each $(x,y)$ pair
such that $f(x,y)=1$ there is precisely one message $m$ that will cause Bob
to output 1.  A set of the form $A\times B$ for $A\subseteq \set{0,1}^{\bf X}$,
$B\subseteq \set{0,1}^{\bf Y}$ is called a \emph{rectangle}.  The minimum of
$|m|$ over all unambiguous protocols is the
{\em unambiguous communication complexity} of $f$;
it is known to be the logarithm base 2 of the minimum number of rectangles
into which one can partition the set of inputs on which $f$ is 1.

A canonical hard problem for communication complexity is the two-party
disjointness (set intersection) problem, $\bigvee_{i=1}^n x_i\land y_i$ where $x$
and $y$ are indicator vectors of sets in $[n]$.  It has deterministic
communication complexity $n+1$ (and requires $\Omega(n)$ bits be sent even with
randomness, but that is beyond what we need).
We will need a variant of the ``best partition" version of
communication complexity in which the protocol includes a choice of the best
split of input indices $\bf X$ and $\bf Y$ between Alice and Bob. 

A typical method for proving lower bounds on OBDD size for a Boolean function
$f$ begins
by observing that a size $s$ OBDD may be simulated by a $\log s$-bit 
one-way communication protocol where Alice holds the first half of the 
variables read by the OBDD and Bob holds the second half. In this protocol, 
Alice starts at the root of the OBDD and follows the (unique) OBDD path 
determined by her half of the input until she reaches
 a node $v$ querying a variable held by Bob. She then sends the identity of
 the node $v$ to Bob, who can finish the computation starting from $v$. Thus, 
 if we show that $f$ has one-way communication complexity
$CC_{{\bf X} \rightarrow {\bf Y}}(f({\bf X},{\bf Y}))$ at least $C$
 in the best split $\{{\bf X},{\bf Y}\}$ of its input variables,
 then any OBDD computing $f$ must have at least $2^C$ nodes.
 
Our lower bound for SDDs uses related ideas but in a more sophisticated way,
and instead of providing a one-way deterministic protocol, we give an
unambiguous protocol that simulates the SDD computation.  In particular, the
conversion to deterministic protocols requires two-way communication.

\section{SDDs and Best-Partition Communication Complexity}
\label{sdd-lower}

In this section, we show how we can use any small SDD representing a function
$f$ to build an efficient communication protocol for $f$ given an approximately
balanced partition of input variables that is determined by its associated
vtree.  
As a consequence, any function requiring large communication complexity
under all such partitions requires large SDDs.  
To begin this analysis, we consider how an SDD simplifies under a
partial assignment to its input variables.

\subsection{Pruning SDDs Using Restrictions}

\begin{definition}
Suppose that $\vtree$ is a pruned vtree for a set of variables ${\bf X}$, and
that $v$ is a vertex in $\vtree$.
Let $\desc(v)$ denote the set of variables that are descendants of $v$ in
$\vtree$ and $\shell(v) = {\bf X} \setminus \desc(v)$.
Also let $\parent(v)$ denote the (unique) vertex in $\vtree$ that has $v$ as a
child.
\end{definition}

We define a construction to capture what happens to an SDD under a partial
assignment of its variables.

\begin{definition}
Let $\SDD$ be an SDD that respects $\vtree$, a vtree for the variables
${\bf X}$, and suppose that $\SDD$ computes the function $f$.
Let ${\bf B} \subseteq {\bf X}$ and
${\bf A} = {\bf X} \setminus {\bf B}$ and let
$\rho:{\bf A} \rightarrow \{0,1 \}$
be an assignment to the variables in ${\bf A}$. 
Let $\SDD|_{\rho}$ be Boolean circuit remaining after plugging the partial
 assignment $\rho$ into the SDD $\SDD$ and making the following
  simplifications:
  \begin{enumerate}
  \item If a gate computes a constant $c \in \{\top,\bot\}$ under the partial
   assignment $\rho$, we can replace that gate and its outgoing edges with $c$.
  \item Remove any children of OR-gates that compute $\bot$.
  \item Remove any nodes disconnected from the root.
  \end{enumerate}
  
For each vtree vertex $v \in \vtree$ that was not removed in this process,
we denote its counterpart in the pruned vtree $\vtree|_{\bf A}$ by
$v|_{\bf A}$.

Construct the pruned vtree $\vtree|_{{\bf A}}$ from $\vtree$ as follows: for 
each vertex $v$, if $\desc(v) \subseteq {\bf A}$ and
$\desc(\parent(v)) \not\subseteq {\bf A}$, replace $v$ and its subtree by a
stub.
We say that we have \emph{pruned} the subtree rooted at $v$.
(See Figure~\ref{fig:pruning} for an example of an SDD and its vtree both before and after pruning.)
\begin{figure}[t]
  \centering 
\includegraphics[scale = 0.25]{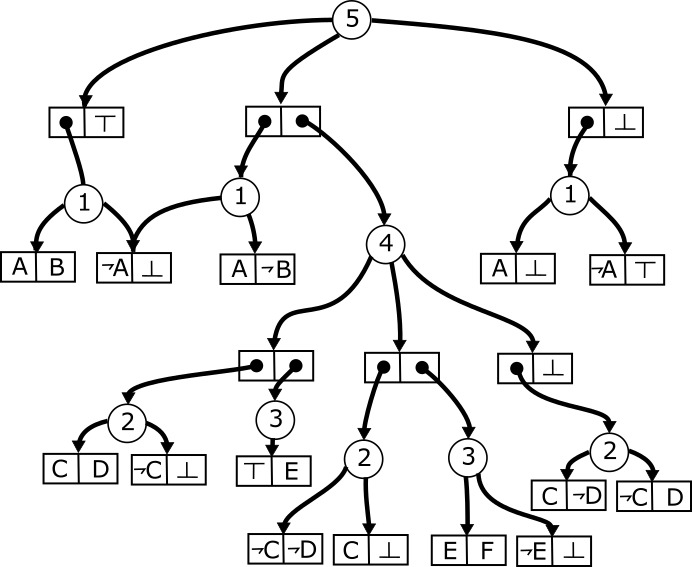}
\hfil 
\includegraphics[scale = 0.25]{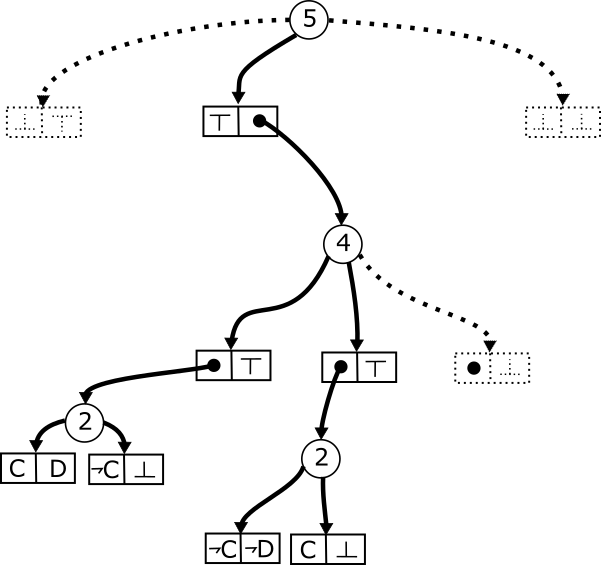}
\vskip 1ex
\includegraphics[scale = 0.25]{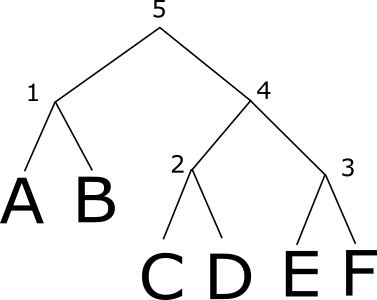}
\hfil 
\includegraphics[scale = 0.25]{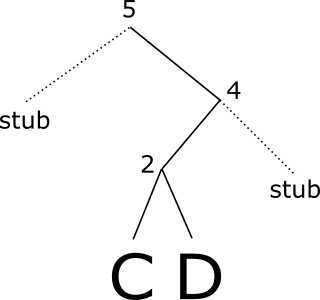}
\caption{An SDD and its vtree, as well as the pruned pair after setting $B$ to 0 and $A,E,F$ to 1.}
\label{fig:pruning}
%\vspace{-3.1ex}
\end{figure}

%Let $\match$ be a function where $\match(v|_{\bf A}) \mapsto v$ for
%$v|_{\bf A} \in \vtree|_{\bf A}$ and $%\match(\eta|_{\rho}) \mapsto \eta$
%for $\eta|_{\rho} \in \SDD|_{\rho}$.

For ${\bf A} \subseteq {\bf X}$, we call $\{{\bf A},{\bf X}\setminus {\bf A}\}$
a \emph{shell partition} for ${\bf X}$ if there is a vtree vertex $v \in \vtree$
such that $\shell(v) = {\bf A}$.
We call ${\bf A}$ the \emph{shell}.
If, for a restriction $\rho:{\bf A} \rightarrow \{0,1\}$, there exists a
vtree vertex $v \in \vtree$ such that $\shell(v) = {\bf A}$, we call $\rho$ a
\emph{shell restriction}.
\label{def:restrict}
\end{definition}

\begin{proposition}
\label{properties}
Let $\SDD$ be an SDD that respects $\vtree$, a vtree for the variables
${\bf X}$, and suppose that $\SDD$ computes the function $f$.
Let ${\bf A} \subseteq {\bf X}$ and $\rho:{\bf A} \rightarrow \{0,1 \}$ be
a partial assignment of the variables in ${\bf A}$. The pruned SDD
$\SDD|_{\rho}$ has the following properties:\\
(a) $\langle \SDD|_{\rho} \rangle = f|_{\rho}$.\\
(b) $\SDD|_{\rho}$ is a pruned SDD respecting $\vtree|_{{\bf A}}$.\\
(c) $G(\SDD|_{\rho})$ is a subgraph of $G(\SDD)$.
\end{proposition}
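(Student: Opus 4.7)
The plan is to prove (a), (b), and (c) simultaneously by structural induction on the vtree $\vtree$ (equivalently, on the recursive definition of the SDD $\SDD$). The base cases, in which the root of $\vtree$ is a leaf labelled by some variable $X$ (so $\SDD\in\{X,\neg X\}$) or in which $\SDD\in\{\top,\bot\}$, are immediate: the restriction either leaves $\SDD$ unchanged (when $X\notin{\bf A}$) or collapses it to a constant (when $X\in{\bf A}$), and in either case the resulting single-leaf graph is a pruned SDD respecting the corresponding pruned vtree, is a subgraph of $G(\SDD)$, and computes $f|_\rho$.

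For the inductive step, write $\SDD=\{(p_1,s_1),\ldots,(p_\ell,s_\ell)\}$ respecting a vtree $\vtree$ whose root $v$ has children $v_L,v_R$. Let ${\bf A}_L={\bf A}\cap\desc(v_L)$ and ${\bf A}_R={\bf A}\cap\desc(v_R)$, and split $\rho$ accordingly into $\rho_L$ and $\rho_R$. By the inductive hypothesis, each $p_i|_{\rho_L}$ (resp.\ $s_i|_{\rho_R}$) is a pruned SDD respecting $\vtree|_{{\bf A}_L}$ (resp.\ $\vtree|_{{\bf A}_R}$), has graph embedded as a subgraph of $G(p_i)$ (resp.\ $G(s_i)$), and computes the corresponding restricted function. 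I would then consider three subcases for how the root behaves under $\rho$:
\begin{itemize}
\item If $\desc(v)\subseteq{\bf A}$, then $\langle\SDD\rangle|_\rho$ is a constant $c\in\{\top,\bot\}$, simplification rule~1 replaces the whole root by $c$, the pruned vtree $\vtree|_{\bf A}$ replaces the root subtree by a stub, and all three claims hold.
\item Otherwise the root $v$ persists in $\vtree|_{\bf A}$, and I would form the list of surviving pairs $(p_i|_{\rho_L},s_i|_{\rho_R})$ after rule~2 discards the pairs where either side became $\bot$, keeping the rest intact by rule~1 (note that a side equal to $\top$ is allowed at a stub child under the pruned-SDD definition). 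Part~(a) then follows from $\langle\SDD\rangle=\bigvee_i(\langle p_i\rangle\wedge\langle s_i\rangle)$, the inductive hypothesis applied to each piece, and the fact that removing $\bot$ disjuncts preserves the OR.
\end{itemize}

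The main technical point—where I would spend most of the care—is verifying part~(b), specifically that the surviving primes $\langle p_i|_{\rho_L}\rangle$ remain pairwise disjoint (so that $\SDD|_\rho$ is a legal pruned SDD). This is a pointwise property: since $\langle p_i\rangle\wedge\langle p_j\rangle=\bot$ on $\{0,1\}^{\desc(v_L)}$ for $i\neq j$, restricting to any $\rho_L$ preserves $\langle p_i|_{\rho_L}\rangle\wedge\langle p_j|_{\rho_L}\rangle=\bot$. In the pruned case (root not a stub), we need only disjointness, not partition, which is exactly what the pruned-SDD definition permits; conversely, if some $\langle p_i|_{\rho_L}\rangle=\top$ then disjointness forces every other surviving prime to be $\bot$ and hence removed by rule~2, so no conflict arises. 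Part~(c) then follows because every simplification step only deletes nodes/edges or relabels an already existing subgraph as a constant leaf (which may be assumed present in $G(\SDD)$), together with the inductive subgraph containments for the $G(p_i|_{\rho_L})$ and $G(s_i|_{\rho_R})$. The only subtlety worth highlighting is confirming that the pruned-vtree bookkeeping via $v|_{\bf A}$ matches the surviving circle nodes of $G(\SDD|_\rho)$—this is again immediate by induction, since pruning a vtree subtree coincides exactly with the case where the corresponding subcircuit collapsed to a constant.
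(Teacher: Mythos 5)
Your proposal is correct and rests on the same key facts as the paper's own proof: plugging in $\rho$ and applying the simplification rules preserves the computed function, disjointness of the primes is a pointwise property preserved under restriction (with fully-assigned subtrees collapsing to constants that match the stubs of $\vtree|_{\bf A}$), and the construction only deletes or relabels-to-constant existing nodes, giving the subgraph claim. The only difference is presentational: you wrap these observations in a structural induction on the vtree, whereas the paper states them directly as global properties of the circuit and of the pruning process.
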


\begin{proof}
(a): An SDD may be equivalently described as a Boolean circuit of alternating
OR and AND gates.
For any Boolean circuit in the variables ${\bf X}$ that computes $f$, plugging
in
the values for the restriction $\rho$ yields a circuit computing $f|_{\rho}$.
Furthermore, the simplification steps do not change the function computed.

(b): For each $v$ such that $\desc(v) \subseteq {\bf A}$ and
$\desc(\parent(v)) \not\subseteq {\bf A}$, we have replaced the subtree rooted
at $v$ by a stub and replaced the SDDs in $\SDD$ respecting $v$ by either
$\top$ or $\bot$.
Thus $\SDD|_{\rho}$ respects $\vtree|_{{\bf A}}$.

We now check that $\SDD|_{\rho}$ is a pruned SDD.
In particular we need to ensure that for each SDD
$\SDD' = \{(p_1,s_1),\ldots,(p_\ell,s_\ell)\}$ in $\SDD$,
the corresponding pruned SDDs that remain from $p_1,\ldots,p_\ell$ in its pruned
counterpart $\SDD'|_{\rho}$ represent a collection of disjoint functions.
From the first part of this proposition, these are
$\langle p_{i_1} \rangle|_{\rho}, \ldots, \langle p_{i_k} \rangle|_{\rho}$
for some $k \leq n$, where we have only included those SDDs that are
consistent under $\rho$.
Since the original set of SDDs was a partition and thus disjoint, this set
of restricted (pruned) SDDs is also disjoint.

(c): The process in Definition~\ref{def:restrict} only removes nodes from
$G(\SDD)$ to construct $G(\SDD|_{\rho})$.
Further, it does not change the label of any SDD that was not removed.
\end{proof}

\subsection{Unambiguous Communication Protocol for SDDs}

The way that we will partition the input variables to an SDD between the parties
Alice and Bob in the communication protocol will respect the structure of its
associated vtree.  The restrictions will correspond to assignments that
reflect Alice's knowledge of the input and will similarly respect that
structure.

Notice that a vtree cut along an edge $(u,v)$ (where $u$ is the parent of $v$)
induces a shell partition for ${\bf X}$ consisting of the set
${\bf B} = \desc(v)$, and the shell ${\bf A} = {\bf X} \setminus {\bf B}$.

\begin{proposition}
\label{disj}
Let $\alpha$ be an SDD of size $s$ computing a function
$f:\set{0,1}^{\bf X} \rightarrow \{0,1\}$ that respects a vtree $\vtree$.
Suppose that $\{{\bf A},{\bf B}\}$ is a shell partition for ${\bf X}$ and that
${\bf A}$ is its shell.
Let $b$ be the vertex in $\vtree$ for which $\desc(b) = {\bf B}$ and
$\desc(\parent(b)) \not\subseteq {\bf B}$.

For any shell restriction $\rho:{\bf A} \rightarrow \{0,1\}$, the set
$\langle \lab_{\SDD|_{\rho}}(b|_{\bf A}) \rangle$ is a disjoint collection
of functions.
\end{proposition}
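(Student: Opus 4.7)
I would prove Proposition~\ref{disj} by strengthening it to a downward induction along the path in $\vtree$ from the root to $b$, with the invariant \emph{at every ancestor $v$ of $b$ (inclusive), the collection $\langle \lab_{\alpha|_\rho}(v|_{\bf A}) \rangle$ is disjoint when its elements are viewed as Boolean functions after restriction by $\rho$.} At the endpoint $v = b$, the variables $\desc(b) = {\bf B}$ are all free of $\rho$, so the restriction acts trivially and the proposition is recovered. The base case at the root is immediate, since $\alpha$ itself is the unique SDD there.

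The inductive step is the heart. Fix an ancestor $v \neq b$ of $b$, let $w$ be the child of $v$ on the path to $b$, and $w'$ the sibling; by construction every variable in $\desc(w')$ lies in ${\bf A}$ and is fully set by $\rho$. Every surviving $\tau \in \lab_{\alpha|_\rho}(w|_{\bf A})$ must appear as a component of a surviving decomposition $\gamma = \{(p_1, s_1), \ldots\}$ belonging to $\lab_{\alpha|_\rho}(v|_{\bf A})$, and I would analyze $\gamma|_\rho$ in two sub-cases.

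If $w = v_R$, then the partitioned side $v_L = w'$ is fully restricted, and the partition property on the $p_i$'s forces exactly one $p_{i^\star}$ to evaluate to $\top$ while the rest evaluate to $\bot$; thus $\gamma$ collapses to the single surviving component $s_{i^\star}$ at $w$ with $\langle \gamma \rangle|_\rho = \langle s_{i^\star} \rangle|_\rho$. If $w = v_L$, then the unpartitioned side $v_R = w'$ is fully restricted, each $s_i$ becomes a constant, and $\langle \gamma \rangle|_\rho$ equals the disjunction of those $\langle p_i \rangle|_\rho$ with $\langle s_i \rangle|_\rho = \top$. In either sub-case I obtain the key inequality $\langle \tau \rangle|_\rho \leq \langle \gamma \rangle|_\rho$ whenever $\tau$ is a surviving component of $\gamma$ at $w$. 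Now for distinct $\tau_1, \tau_2 \in \lab_{\alpha|_\rho}(w|_{\bf A})$ arising from $\gamma_1, \gamma_2$: if $\gamma_1 = \gamma_2$, necessarily $w = v_L$ (the $w = v_R$ case contributes only one component per $\gamma$), and $\tau_1, \tau_2$ are two distinct $p_i$'s in the partition of $\gamma_1$, which are disjoint by the partition property; if $\gamma_1 \neq \gamma_2$, the inductive hypothesis gives $\langle \gamma_1 \rangle|_\rho \wedge \langle \gamma_2 \rangle|_\rho = \bot$, and combined with the key inequality this yields $\langle \tau_1 \rangle|_\rho \wedge \langle \tau_2 \rangle|_\rho = \bot$.

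The main obstacle is getting the bookkeeping right for how the pruning acts on each $\gamma$, since the SDD partition constraint is imposed only on the left components $p_i$: the partition property must be used directly in one sub-case (when $b$'s path descends through $v_L$ and two $\tau$'s come from the same parent) and indirectly through the inductive hypothesis in the complementary sub-case. The monotonicity $\langle \tau \rangle|_\rho \leq \langle \gamma \rangle|_\rho$ is forced by the structure of the collapse and is exactly what lets the inductive disjointness at $v$ descend to disjointness at $w$.
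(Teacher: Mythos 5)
Your proof is correct and takes essentially the same route as the paper's: a top-down induction along the root-to-$b$ path of the pruned vtree, with a case split on whether the path descends through the left or right child, using the partition property of the primes for two components sharing a parent and the inductive disjointness at the parent level (via the monotonicity $\langle \tau\rangle|_\rho \le \langle\gamma\rangle|_\rho$) otherwise. No changes needed.
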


\begin{sloppypar}
\begin{proof}
For non-shell restrictions $\rho'$, the collection of functions 
$\langle \lab_{\SDD|_{\rho'}}(v)\rangle$ for a vtree node $v$ is not
disjoint; we need to use the specific properties of $\bf A$ and $b$.
Since $\rho$ was a shell restriction, the pruned vtree $\vtree|_{\bf A}$ takes
the form of a path $v_1|_{\bf A}, \ldots, v_{k}|_{\bf A}$ of internal vertices,
where $v_1$ is the root of $\vtree$,  and $v_{k}|_{\bf A}= b|_{{\bf A}}$, with
the other child of each of $v_1|_{\bf A},\ldots, v_{k-1}|_{\bf A}$ being a
stub, together with a vtree for the variables ${\bf B}$ rooted at $b$.
We will show that if $\langle \lab_{\SDD|_{\rho}}( v_i|_{\bf A})\rangle$ is disjoint
then so is $\langle \lab_{\SDD|_{\rho}}( v_{i+1}|_{\bf A})\rangle$.
This will prove the proposition since $\langle \lab_{\SDD|_{\rho}}( v_1|_{\bf A})\rangle$
only contains the function $\langle \SDD|_{\rho}\rangle$ and is therefore
trivially disjoint.

We will use the fact that every pruned-SDD from
$\lab_{\SDD|_{\rho}}( v_{i+1}|_{\bf A})$ is contained in some SDD from
$\lab_{\SDD|_{\rho}}( v_{i}|_{\bf A})$.
We have two cases to check: $v_{i+1}|_{\bf A}$ is either a left child or a
right child of $v_{i}|_{\bf A}$. 

If $v_{i+1}|_{\bf A}$ was a right child then each pruned-SDD
$\eta|_{\rho}$ contained in $\lab_{\SDD|_{\rho}}(v_{i}|_{\bf A})$ takes the form
$\eta|_{\rho} = \{ (\top,s|_{\rho})\}$. 
Then $\langle \lab_{\SDD|_{\rho}}(v_{i+1}|_{\bf A})\rangle = \langle \lab_{\SDD|_{\rho}}(v_{i}|_{\bf A})\rangle$ 
and is therefore disjoint by assumption.

Otherwise suppose that $v_{i+1}|_{\bf A}$ is the left child of $v_{i}|_{\bf A}$.
Let $\eta|_{\rho} \in \lab_{\SDD|_{\rho}}(v_{i}|_{\bf A})$.
Let $\eta|_{\rho} = \{(\eta_1|_{\rho},\top),\ldots,(\eta_k|_{\rho}),\top)\}$
where $\bigvee_{i=1}^{k} \langle \eta_{i}|_{\rho} \rangle = \langle \eta|_{\rho} \rangle$
and $\{\langle \eta_1|_{\rho}\rangle,\ldots,\langle \eta_k|_{\rho} \rangle\}$
, being a collection of primes for $\eta|_{\rho}$,
is disjoint.
By assumption $\langle \lab_{\SDD|_{\rho}}( v_i|_{\bf A})\rangle$ is disjoint, so for any
other $\eta'|_{\rho} = \{(\eta'_1|_{\rho},\top),\ldots,(\eta'_{k'}|_{\rho},\top)\} \in \lab_{\SDD|_{\rho}}(v_{i}|_{\bf A})\}$
distinct from $\eta|_{\rho}$, we have
$\langle\eta|_{\rho} \rangle \wedge \langle\eta'|_{\rho} \rangle = \bot$.
Then for any $i \in [k]$ and $j \in [k']$, we have
$\langle \eta_{i}|_{\rho}\rangle \wedge \langle \eta'_{j}|_{\rho} \rangle = \bot$.
Thus $\langle \lab_{\SDD|_{\rho}}(v_{i+1}|_{\bf A})\rangle$ is disjoint.
\end{proof}
\end{sloppypar}

\begin{theorem}
\label{mainthm}
Let $\alpha$ be an SDD of size $s$  that respects a vtree $\vtree$ and suppose 
that it computes the function $f: \set{0,1}^{\bf X} \rightarrow \{0,1\}.$
Suppose that $\{{\bf A},{\bf B}\}$ is a shell partition for ${\bf X}$ and that
${\bf A}$ is the shell.
Let $b$ be the vertex in $\vtree$ for which $\desc(b) = {\bf B}$ and
$\desc(\parent(b)) \not\subseteq {\bf B}$.

Consider the communication game where Alice has the variables ${\bf A}$, Bob
has the variables ${\bf B}$, and they are trying to compute
$f({\bf A},{\bf B})$.
There is a $\log s$-bit unambiguous communication protocol computing $f$.
\end{theorem}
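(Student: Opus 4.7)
The plan is to have Alice apply her partial assignment $\rho : {\bf A} \to \{0,1\}$ to $\alpha$, and then to use Proposition~\ref{disj} to decompose the restricted function $f|_\rho$ as a disjoint OR of the functions labeled by certain circle nodes in $G(\alpha)$. The unambiguous protocol then has Alice nondeterministically send Bob the name of one such circle node; because $G(\alpha)$ has at most $s$ nodes, this costs only $\log s$ bits, and disjointness of the decomposition guarantees the unambiguous property.

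First I would establish a decomposition $f|_\rho = \bigvee_{\beta \in T} \langle \beta \rangle$, where $T$ is a subset of $\lab_{\SDD|_{\rho}}(b|_{\bf A})$. By Proposition~\ref{properties}, the pruned SDD $\SDD|_\rho$ respects $\vtree|_{\bf A}$ and computes $f|_\rho$. Because $\rho$ is a shell restriction with shell ${\bf A}$, the pruned vtree $\vtree|_{\bf A}$ is (as observed in the proof of Proposition~\ref{disj}) a spine $v_1|_{\bf A}, \ldots, v_k|_{\bf A}$ running from the root of $\vtree$ down to $v_k|_{\bf A} = b|_{\bf A}$, with a stub at the non-spine child of each $v_i|_{\bf A}$ for $i < k$, and with the unchanged subtree on ${\bf B}$ hanging below $b|_{\bf A}$. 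Tracing down the spine as in the proof of Proposition~\ref{disj}: each pruned SDD at $v_i|_{\bf A}$ either has the form $\{(\top, s|_\rho)\}$ (when $v_{i+1}$ is a right child of $v_i$), in which case it computes the same function as a pruned SDD at $v_{i+1}|_{\bf A}$; or it has the form $\{(\eta_1|_\rho,\top),\ldots,(\eta_m|_\rho,\top)\}$ (when $v_{i+1}$ is a left child), in which case it computes the OR of $m$ pairwise disjoint functions from $v_{i+1}|_{\bf A}$. Starting at $v_1|_{\bf A}$, where the unique function present is $f|_\rho$, and iterating down to $b|_{\bf A}$ yields the claimed decomposition; by Proposition~\ref{disj} the functions $\{\langle \beta \rangle : \beta \in T\}$ are pairwise disjoint on $\{0,1\}^{\bf B}$.

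The protocol is then immediate. Alice, knowing $\rho$, computes $T$ and nondeterministically selects some $\beta \in T$, sending its identity as a circle node of $G(\alpha)$ labeled by $b$; since Proposition~\ref{properties}(c) gives $G(\SDD|_\rho) \subseteq G(\SDD)$ and the latter has at most $s$ nodes, $\lceil \log s \rceil$ bits suffice. Bob simply evaluates $\langle \beta \rangle({\bf B})$ and outputs that bit. Correctness: $f({\bf A},{\bf B}) = f|_\rho({\bf B}) = 1$ iff some $\beta \in T$ has $\langle \beta \rangle({\bf B}) = 1$. Unambiguousness: when $f({\bf A},{\bf B}) = 1$, the disjointness of $\{\langle \beta \rangle : \beta \in T\}$ forces exactly one $\beta \in T$ to cause Bob to output $1$.

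The main delicate step is the spine-tracing that produces the disjoint decomposition of $f|_\rho$, but this is essentially a one-line consequence of the inductive argument already carried out in the proof of Proposition~\ref{disj}, combined with the observation that the function at the root of $\vtree|_{\bf A}$ is precisely $f|_\rho$; the remainder of the argument is routine bookkeeping about how circle nodes of $G(\SDD|_\rho)$ correspond to circle nodes of $G(\SDD)$.
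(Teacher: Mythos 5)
Your proposal is correct and follows essentially the same route as the paper's proof: restrict by Alice's shell assignment, use Proposition~\ref{disj} to get disjointness of the functions at $b|_{\bf A}$, have Alice nondeterministically name one of the (at most $s$) corresponding circle nodes of $G(\SDD)$, and let Bob evaluate it. The only difference is cosmetic: you spell out the spine-tracing decomposition $f|_\rho = \bigvee_{\beta\in T}\langle\beta\rangle$ that the paper asserts in one line, and you justify the identification of pruned nodes with nodes of $G(\SDD)$ via Proposition~\ref{properties}(c) where the paper argues directly that the SDDs at $b$ are untouched by $\rho$.
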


\begin{proof}
Suppose that Alice and Bob both know the SDD $\alpha$.
Let $\rho:{\bf A} \rightarrow \{0,1\}$ be the partial assignment corresponding
to Alice's input.
This is a shell restriction.
Alice may then privately construct the pruned SDD $\SDD|_{\rho}$, which
computes $f|_{\rho}$ by Proposition~\ref{properties}.
Further,  $\SDD|_{\rho}$ evaluates to $1$ under Bob's input
$\phi:{\bf B} \rightarrow \{0,1\}$ if and only if there exists a
pruned-SDD $\eta|_{\rho} \in \lab_{\SDD|_{\rho}}(b|_{\bf A})$ such that
$\langle \eta|_{\rho} \rangle(\phi) = 1$.

By Proposition~\ref{disj}, $\langle\lab_{\SDD|_{\rho}}(b|_{\bf A})\rangle$ is disjoint.
Also, since $\rho$ is a shell restriction with shell $\bf A$, and 
$\desc(b) = {\bf B}={\bf X}\setminus {\bf A}$, every SDD in
$\lab_{\SDD|_{\rho}}(b|_{\bf A})$ was unchanged by $\rho$.
In particular, this means that $\lab_{\SDD|_{\rho}}(b|_{\bf A})\subseteq \lab_{\SDD}(b)$
and any pruned-SDD $\eta|_\rho$ can be viewed as some
$\eta\in \lab_{\SDD}(b)$ that is also in 
$\lab_{\SDD|_{\rho}}(b|_{\bf A})$.

For the protocol Alice nondeterministically selects an
$\eta$ from $\lab_{\SDD|_{\rho}}(b|_{\bf A})$ and then sends its identity as
a member of $\lab_{\SDD}(b)$ to Bob.
This requires at most $\log s$ bits.
Bob will output 1 on his input $\phi$ if and only if
$\langle \eta\rangle (\phi)=1$, which he can test since he knows $\SDD$ and
$b$.
This protocol is unambiguous since the fact that
$\langle\lab_{\SDD|_{\rho}}(b|_{\bf A})\rangle$ is disjoint means
means that for any input $\phi$ to Bob there is
at most one $\eta\in \lab_{\SDD|_{\rho}}(b|_{\bf A})$ such that 
$\langle \eta\rangle(\phi)=1$.
Since Bob knows $\SDD$, he also knows $\eta$ and can therefore compute
$\langle \eta \rangle(\phi)$.
Since $\SDD$ computes $f$, if $\langle \eta \rangle(\phi) = 1$ then
$f(\phi,\rho) = 1$.
Otherwise all of the functions in $\langle \lab_{\SDD|_{\rho}}(b|_{\bf A}) \rangle$
evaluate to $0$ on input $\phi$ so $f(\phi,\rho) = 0$.
\end{proof}

We can relate the deterministic and unambiguous communication complexities of a
function using the following result from~\cite{yannakakis:extension}.
We include a proof of this result in the appendix for completeness.

\begin{theorem}[Yannakakis]
\label{yan}
If there is an $g$-bit unambiguous communication protocol for a function
$f:\set{0,1}^{\bf A}\times \set{0,1}^{\bf B} \rightarrow \{0,1\}$,
then there is a $(g+1)^2$-bit deterministic protocol for $f$.
\end{theorem}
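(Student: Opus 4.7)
The plan is to extract a rectangle partition from the unambiguous protocol and use it to drive a round-based deterministic protocol. The $g$-bit unambiguous protocol for $f$ partitions $f^{-1}(1)$ into at most $m \leq 2^g$ rectangles $R_i = A_i \times B_i$, where the message bits serve as a name for the rectangle. Setting $I_A(x) = \{i : x \in A_i\}$ and $I_B(y) = \{i : y \in B_i\}$, unambiguity translates exactly to the cross-intersection bound $|I_A(x) \cap I_B(y)| \leq 1$ for every $(x,y)$, with $f(x,y) = 1$ iff this intersection is non-empty, in which case its unique element names the unique rectangle containing $(x,y)$.

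The deterministic simulation then proceeds in $g+1$ phases of $g+1$ bits each. The parties share a ``live set'' $S \subseteq [m]$ of candidate rectangle indices (initially $S = [m]$), maintaining the invariant that if $f(x,y) = 1$, the unique witness index lies in $S$. Each phase is designed to either terminate with the correct output -- by identifying a mutual witness or by certifying that none exists -- or else replace $S$ by a subset of size at most $|S|/2$. After $g+1$ such phases, $|S|$ is strictly less than $1$, so the protocol can safely output $0$; the total cost is $(g+1)^2$ bits.

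The heart of the argument is implementing a single halving phase within the $g+1$-bit budget. The intended strategy is for Alice to send the $(g+1)$-bit name of a ``balanced'' element of $I_A(x) \cap S$ -- for instance a median-type element relative to a pre-agreed ordering of the live set -- and for Bob to answer whether this index also lies in $I_B(y)$. A positive answer certifies the witness and terminates the protocol; a negative answer must allow both parties to prune $S$ to at most half its size, exploiting the cross-intersection constraint $|I_A(x) \cap I_B(y)| \leq 1$ to deduce that the dismissed indices cannot contain the witness. The main obstacle is making this halving step rigorous: one must exhibit a deterministic strategy that always succeeds in either terminating or halving $|S|$ using only $g+1$ bits, regardless of the inputs. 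This is the combinatorial core of Yannakakis's theorem, and it amounts to a nontrivial structural property of pairs of families of subsets of $[m]$ satisfying the pairwise-intersection-at-most-one condition.
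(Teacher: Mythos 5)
Your setup is correct and matches the standard route: the unambiguous protocol yields a partition of $f^{-1}(1)$ into at most $2^g$ rectangles, and unambiguity is exactly the statement that each input $(x,y)$ lies in at most one of them. But the proof stops at the point where the actual work begins. You explicitly defer ``the combinatorial core'' --- exhibiting a deterministic $(g+1)$-bit step that either terminates correctly or halves the live set --- and the specific strategy you sketch does not achieve it. If Alice names a ``median'' index $i \in I_A(x)\cap S$ and Bob reports $i\notin I_B(y)$, the only index either party can provably discard is $i$ itself: the condition $|I_A(x)\cap I_B(y)|\le 1$ says nothing about which \emph{other} indices of $S$ might be the witness, so there is no warrant for pruning half of $S$. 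A pre-agreed ordering of $S$ carries no information about the inputs, so ``balanced with respect to that ordering'' cannot drive a halving argument.

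The missing idea is the clique-versus-independent-set structure. Build a graph $G$ on the $2^g$ rectangles with an edge between two rectangles iff they share a row. Disjointness of the rectangles forces the rectangles meeting any fixed column to form an independent set, while those meeting a fixed row form a clique; thus Alice holds a clique $K = I_A(x)$, Bob holds an independent set $I = I_B(y)$, and $f(x,y)=1$ iff $K\cap I\ne\emptyset$. The halving step is then degree-based, not median-based: Alice looks for a vertex $u\in K$ adjacent to fewer than half the surviving vertices; if she finds one, every vertex of $K$ is a neighbor of $u$ (or $u$ itself), so after Bob checks whether $u\in I$ or $u$ has no neighbor in $I$, both parties may delete all non-neighbors of $u$ --- more than half the graph. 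If no such $u$ exists, the roles flip and Bob searches $I$ for a vertex of degree at least half, whose neighbors can then be deleted since none can lie in $I$. One of the two cases always applies, each round costs $g+1$ bits, and at most $g$ rounds occur. Without this graph-theoretic reformulation (or an equivalent structural lemma about cross-intersecting set families), the halving claim is unsupported and the proof is incomplete.
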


The following $1/3$-$2/3$ lemma is standard.

\begin{lemma}
For a vtree $\vtree$ for $L$ variables, if a vertex $b$ satisfies
$\frac{1}{3} L \leq |\desc(b)| \leq \frac{2}{3} L$, we call it a
\emph{$(1/3,2/3)$ vertex}.
Every vtree contains a $(1/3,2/3)$ vertex.
\label{balnode}
\end{lemma}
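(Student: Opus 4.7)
The plan is to perform a descent from the root of $\vtree$ along the heavier of the two children at each step, and argue that the first vertex whose subtree is small enough must still be large enough.

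Concretely, start at the root $r$, where $|\desc(r)| = L$. While the current vertex $v$ is internal, move to whichever of its two children has the larger $|\desc|$ value (breaking ties arbitrarily). Let $b$ be the first vertex encountered for which $|\desc(b)| \le \tfrac{2}{3}L$. Such a $b$ exists because the walk eventually reaches a leaf, at which point $|\desc| = 1 \le \tfrac{2}{3}L$ (for $L \ge 2$; the $L=1$ case is trivial since a vtree on a single variable has a single leaf that is itself a stub case, and the bound can be read as vacuous).

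Next, I would show $|\desc(b)| \ge \tfrac{1}{3}L$. Since $|\desc(r)| = L > \tfrac{2}{3}L$ whenever $L\ge 1$, the vertex $b$ cannot be the root, so it has a parent $u$ on the descent path. By the minimality in the choice of $b$, the parent satisfies $|\desc(u)| > \tfrac{2}{3}L$. Because $\vtree$ is a full binary tree, the variable sets of the two children of $u$ partition $\desc(u)$, so one of them has size at least $|\desc(u)|/2 > \tfrac{1}{3}L$. Since $b$ was selected as the heavier child of $u$, we have $|\desc(b)| > \tfrac{1}{3}L$. Combined with $|\desc(b)| \le \tfrac{2}{3}L$, this shows $b$ is a $(1/3,2/3)$ vertex.

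There is no real obstacle here: the argument is the standard heavy-path descent for binary trees, and the only subtlety is that fullness of the vtree guarantees the two children of $u$ partition $\desc(u)$ so that the heavier one retains at least half of the variables.
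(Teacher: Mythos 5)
Your proof is correct and is essentially the same argument the paper uses: descend from the root through the heavier child, stop at the first vertex whose subtree holds at most $\tfrac{2}{3}L$ variables, and note that its parent's subtree exceeds $\tfrac{2}{3}L$ so the heavier child retains more than $\tfrac{1}{3}L$. No further comment is needed.
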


\cut{
\begin{proof}
Starting from the root, we will choose to go to the left or right child until
we reach a $(1/3,2/3)$ vertex.
If we are at a vertex $v \in \vtree$ where $|\desc(v)| > \frac{2}{3} L$, then we
choose to move to the child $v'$ with a greater number of variables in its
subtree.
This child $v'$ always has at least $\frac{1}{3} L$ variables in its subtree
(otherwise there are fewer than $\frac{2}{3} L$ variables in the subtree of
$v$).
Since $|\desc(v')| < |\desc(v)|$ for any $v' \in \desc(v)$, we will eventually
reach a $(1/3,2/3)$ vertex.
\end{proof}
}

\begin{definition}
Let ${\bf X}$ be a set of variables and $({\bf A,B})$ a partition of ${\bf X}$.
We call the partition $({\bf A,B})$ a \emph{$(\delta,1-\delta)$-partition} for
$\delta \in [0,1/2]$ if $\min(|{\bf A}|,|{\bf B}|) \geq \delta |{\bf X}|$.
That is, the minimum size of one side of the partition is at least a
$\delta$-fraction of the total number of variables.

The best $(\delta,1-\delta)$-partition communication complexity of a Boolean
function $f:\set{0,1}^{\bf X} \rightarrow \{0,1\}$ is $\min( CC(f({\bf A,B})))$ where
the minimum is taken over all $(\delta,1-\delta)$-partitions $({\bf A,B})$.
\end{definition}

\begin{theorem}
\label{maincor}
If the best $(1/3,2/3)$-partition communication complexity of a Boolean
function $f:\set{0,1}^{\bf X} \rightarrow \{0,1\}$ is $C$, then an SDD computing $f$
has size at least $2^{\sqrt{C}-1}$.
\end{theorem}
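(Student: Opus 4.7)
The plan is to combine the three pieces that are already in place: the unambiguous protocol from Theorem~\ref{mainthm}, Yannakakis's deterministic simulation from Theorem~\ref{yan}, and the balanced-vertex guarantee from Lemma~\ref{balnode}. Starting from an SDD $\SDD$ of size $s$ computing $f$ with associated vtree $\vtree$, I would first locate a $(1/3,2/3)$ vertex $b$ in $\vtree$, which exists by Lemma~\ref{balnode}. Set ${\bf B} := \desc(b)$ and ${\bf A} := {\bf X} \setminus {\bf B}$. Since $|\desc(b)|$ lies between $L/3$ and $2L/3$ (where $L = |{\bf X}|$), the sizes of ${\bf A}$ and ${\bf B}$ both lie in $[L/3, 2L/3]$, so $\{{\bf A}, {\bf B}\}$ is a $(1/3, 2/3)$-partition of the input variables. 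Moreover, by construction ${\bf A} = \shell(b)$, so it is also a shell partition in the sense of Definition~\ref{def:restrict}.

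Next, Theorem~\ref{mainthm} applies to this shell partition and yields an unambiguous communication protocol for $f$ (with Alice holding ${\bf A}$ and Bob holding ${\bf B}$) of cost at most $\log s$ bits. Applying Theorem~\ref{yan} to this protocol gives a deterministic protocol for $f$ on the same partition of cost at most $(\log s + 1)^2$ bits.

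By the definition of best $(1/3,2/3)$-partition communication complexity, this deterministic protocol has cost at least $C$, so $(\log s + 1)^2 \geq C$. Taking square roots and rearranging gives $\log s \geq \sqrt{C} - 1$, i.e.\ $s \geq 2^{\sqrt{C} - 1}$, which is the stated bound.

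There is essentially no hard step here; the content of the theorem has been packaged into the earlier results, and this final statement is really just a combinatorial wrapper. The only care required is checking that the vertex $b$ supplied by Lemma~\ref{balnode} does induce both a \emph{shell} partition (needed to apply Theorem~\ref{mainthm}) and a $(1/3,2/3)$-balanced partition (needed to invoke the best-partition lower bound); both follow directly from the definitions.
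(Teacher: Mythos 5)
Your proposal is correct and follows exactly the same route as the paper's own proof: locate a $(1/3,2/3)$ vertex via Lemma~\ref{balnode}, observe the induced partition is both balanced and a shell partition, then chain Theorem~\ref{mainthm} and Theorem~\ref{yan} to get $C \le (\log s + 1)^2$. Nothing is missing.
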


\begin{proof}
Suppose that $\SDD$ is an SDD of size $s$ respecting the vtree $\vtree$ for
variables ${\bf X}$, and that $\SDD$ computes $f$.
From Lemma~\ref{balnode} the vtree $\vtree$ contains a $(1/3,2/3)$ vertex $b$.
This $(1/3,2/3)$ vertex $b$ induces a $(1/3,2/3)$-partition of the variables
$\{{\bf A},{\bf B}\}$ where ${\bf B} = \desc(b)$ and ${\bf A} = \shell(b)$.
Further, this partition $\{{\bf A},{\bf B}\}$ is a shell partition.
By Theorem~\ref{mainthm}, there exists a $\log s$-bit unambiguous communication
protocol for $f({\bf A},{\bf B})$.
Then by Theorem~\ref{yan}, there exists a $(\log(s)+1)^2$-bit deterministic
communication protocol for $f({\bf A},{\bf B})$.
Since the best $(1/3,2/3)$-partition communication complexity of $f$ is $C$,
we have that $C \leq  (\log(s)+1)^2$ which implies that
$s \geq 2^{\sqrt{C}-1}$ as stated.
\end{proof}

\section{Lower Bounds for SDDs}
\label{applications}

There are a large number of predicates $f:\set{0,1}^n\rightarrow \set{0,1}$
for which the $(1/3,2/3)$-partition communication complexity is $\Omega(n)$
and by Theorem~\ref{maincor} each of these requires SDD size
$2^{\Omega(\sqrt{n})}$.
The usual best-partition communication complexity is $(1/2,1/2)$-partition
communication complexity.
For example, the function {\sc ShiftedEQ} which takes as inputs $x,y\in \set{0,1}^n$ and $z\in \set{0,1}^{\lceil \log_2 n\rceil}$ and tests whether or not
$y=SHIFT(x,z)$ where $SHIFT(x,z)$ is the cyclic shift of $x$ by $(z)_2$
positions.
However, as is typical of these functions, the same proof which shows that the 
$(1/2,1/2)$-partition communication complexity of {\sc ShiftedEQ} is
$\Omega(n)$ also shows that its
$(1/3,2/3)$-partition communication complexity is $\Omega(n)$.
However, most of these functions are not typical of predicates to which
one might want to apply weighted model counting.  
Instead we analyze SDDs for formulas derived from a natural class of database
queries.
We are able to characterize SDD size for these queries, proving exponential
lower bounds for every such query that cannot already be represented in linear
size by an OBDD.
This includes an example of a query called $Q_V$ for which FBDDs are polynomial
size but the best SDD requires exponential size.

\subsection{SDD Knowledge Compilation for Database Query Lineages}

We analyze SDDs for a natural class of database queries called the {\em union
of conjunctive queries (UCQ)}.  This includes all queries given by the grammar
$$q::=R({\bf x})\mid \exists x q\mid q\land q\mid q \lor q$$
where $R(\bf x)$ is an elementary relation and $x$ is a variable.
For each such query $q$, given an input database $D$, the query's 
\emph{lineage}, $\Phi^D_q$, is a Boolean expression for $q$ over Boolean 
variables that correspond to tuples in $D$. 
In general, one thinks of the query size as fixed and considers the complexity
of query evaluation as a function of the size of the database. 
The following formulas are lineages (or parts thereof) of
well-known queries 
that that are fundamental for probabilistic 
databases~\cite{DBLP:journals/jacm/DalviS12,DBLP:journals/mst/JhaS13}
over a particular database $D_0$ (called the {\em complete bipartite graph of
size $m$} in \cite{DBLP:journals/mst/JhaS13}):
\begin{align*}
H_0 &= \bigvee\limits_{i,j \in [m]} R_i S_{ij} T_j\\
Q_V &= \bigvee\limits_{i,j \in [m]} R_i S_{ij} \vee S_{ij} T_j \vee R_i T_j \\
H_1 &= \bigvee\limits_{i,j \in [m]} R_i S_{ij} \vee S_{ij} T_j\\
H_{k0} &= \bigvee\limits_{i \in [m]} R_i S^1_{ij} \qquad\mbox{for $k\ge 1$}\\
H_{k\ell} &= \bigvee\limits_{i,j \in [m]} S^\ell_{ij}S^{\ell+1}_{ij} \qquad\mbox{for $0<\ell<k$}\\
H_{kk} &= \bigvee\limits_{i \in [m]} S^k_{ij} T_j \qquad\mbox{for $k\ge 1$.}
\end{align*}
(The corresponding queries are represented using lower case letters
$h_0, q_V, h_1, h_{k0},\ldots, h_{kk}$ and involve unary relations $R$ and $T$, as well as binary relations $S$ and $S^k$. For example, $h_0=\exists x_0\exists y_0  R(x_0) S(x_0,y_0) T(y_0)$.)
The following lemma will be useful in identifying subformulas of the above
query lineages that can be used to compute the set disjointness function.

\begin{proposition}
Let the elements of $[m]\times[m]$ be partitioned into two sets $A$ and $B$,
each of size at least $\delta m^2$.
Let $\row(i)$ denote $\set{i}\times [m]$ and
$\col(j)$ denote $[m]\times \set{j}$.
Define $W_{\row}=\set{i\in [m]\mid \emptyset \ne \row(i)\cap A\mbox{ and }\emptyset\ne \row(i)\cap B}$.
That is, $\row(i)$ for $i \in W_{\row}$ is split into two nonempty pieces by
the partition.
Similarly, define $W_{\col}=\set{i\in [m]\mid \emptyset \ne \col(j)\cap A\mbox{ and }\emptyset\ne \col(j)\cap B}$.
Then\\
\centerline{$\max(|W_{\row}|,|W_{\col}|) \geq \sqrt{\delta}\cdot m.$}
\label{W_bound}
\end{proposition}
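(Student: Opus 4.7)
The plan is to prove the contrapositive. I would assume, for contradiction, that both $|W_{\row}| < \sqrt{\delta}\, m$ and $|W_{\col}| < \sqrt{\delta}\, m$. Since $|A|+|B| = m^2$ with $|A|,|B| \geq \delta m^2$, we must have $\delta \leq 1/2$, so both these quantities are in particular strictly less than $m$. Partition $[m]$ into $R_A, R_B, W_{\row}$ according to whether $\row(i)$ lies entirely in $A$, entirely in $B$, or is split between them, and define $C_A, C_B$ analogously for columns. The hypotheses $|W_{\row}|, |W_{\col}| < m$ guarantee that $R_A \cup R_B$ and $C_A \cup C_B$ are nonempty.

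The key observation is that $R_A$ and $C_B$ cannot both be nonempty: picking $i \in R_A$ and $j \in C_B$ would force the cell $(i,j)$ to lie in both $A$ and $B$, which is impossible. Hence $R_A = \emptyset$ or $C_B = \emptyset$, and symmetrically $R_B = \emptyset$ or $C_A = \emptyset$. Conjoining these two alternatives gives four cases. Two of them, namely $R_A = R_B = \emptyset$ or $C_A = C_B = \emptyset$, immediately force $|W_{\row}| = m$ or $|W_{\col}| = m$, contradicting our strict-inequality assumption.

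The remaining two cases are symmetric: either $R_A = C_A = \emptyset$ or $R_B = C_B = \emptyset$. In the former, any cell of $A$ sits in a row which is not entirely $A$ (impossible since $R_A = \emptyset$) and not entirely $B$ (else the cell would be in $B$), so its row index lies in $W_{\row}$; the identical argument places its column index in $W_{\col}$. Therefore $A \subseteq W_{\row} \times W_{\col}$, yielding
\[
\delta m^2 \leq |A| \leq |W_{\row}|\cdot|W_{\col}| < \sqrt{\delta}\,m \cdot \sqrt{\delta}\,m = \delta m^2,
\]
a contradiction. The case $R_B = C_B = \emptyset$ is identical with $B$ in place of $A$. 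No step here is technically deep; the only thing to take care with is the enumeration of the four subcases arising from the two structural alternatives, and noticing that the "product containment" $A \subseteq W_{\row} \times W_{\col}$ (or its $B$-analog) is exactly the ingredient that turns the linear threshold $\sqrt{\delta}$ into the quadratic lower bound $\delta m^2$ on $|A|$.
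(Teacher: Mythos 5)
Your proof is correct and rests on the same key idea as the paper's: a full row in one part of the partition excludes a full column in the other, which forces one of $A$ or $B$ to be contained in $W_{\row}\times W_{\col}$ and yields the bound $\delta m^2 \le |W_{\row}|\cdot|W_{\col}|$. The paper reaches the containment $B\subseteq W_{\row}\times W_{\col}$ via a short WLOG chain rather than your explicit four-case enumeration, but the argument is essentially identical.
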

\vspace*{-0.3in}
\begin{proof}
Suppose that both $|W_{\row}| < m$ and $|W_{\col}| < m$.
By definition, if $i\notin W_{\row}$ then one of $A$ or $B$ contains an
entire row, $\row(i)$, say $A$ without loss of generality.
This implies that no column $\col(j)$ is entirely contained in
$B$.
Since $|W_{\col}|<m$, there is some column $\col(j)$ that is entirely contained
in $A$.
This in turn implies that $B$ does not contain any full row.
In particular, we have that $A$ contains all rows in $[m]\setminus W_{\row}$
and all columns in $[m]\setminus W_{\col}$ and thus
$B\subseteq W_{\row}\times W_{\col}$ and so $|B|\le |W_{\row}|\cdot |W_{\col}|$.
By assumption, $|B| \geq \delta m^2$.
Hence $|W_{\row}| \cdot |W_{\col}|\ge \delta m^2$ and
so $\max\{|w_{\row}|,|w_{\col}|\}\ge \sqrt{\delta}\cdot m$.
\end{proof}

\begin{theorem}
\label{QV-H1}
For $m \geq 6$, the best $(1/3,2/3)$-partition communication complexity of
$Q_V$, $H_0$, and of $H_1$ is at least $m/3$.
\label{qv}
\end{theorem}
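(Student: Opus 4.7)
The plan is to reduce two-party set intersection on at least $m/3$ coordinates to the evaluation of each of $H_0$, $H_1$, and $Q_V$ under any $(1/3,2/3)$-partition of the variables. Since $n$-coordinate set intersection requires deterministic communication at least $n$, this will yield the claimed $m/3$ lower bound.

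The setup uses the fact that each formula has $m^2+2m$ variables, only $2m$ of which are $R$- or $T$-variables. Thus, on each side of any $(1/3,2/3)$-partition, the number of $S$-variables is at least $(m^2+2m)/3 - 2m = (m^2-4m)/3 \geq m^2/9$ for $m \geq 6$. Viewing the resulting partition of the $S$-variables as a partition of $[m]\times [m]$ into sets each of size at least $m^2/9$, Proposition~\ref{W_bound} with $\delta=1/9$ gives $\max(|W_{\row}|,|W_{\col}|) \geq m/3$. All three formulas are symmetric under swapping $R \leftrightarrow T$ combined with transposing the $S$-matrix, so I may assume $|W_{\row}| \geq m/3$ without loss of generality.

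For the reduction itself, for each split row $i \in W_{\row}$ I pick some $j_i$ so that $R_i$ and $S_{i,j_i}$ lie on opposite sides of the partition; this is possible precisely because row $i$ has $S$-entries on both sides. The set-intersection bits $x_i,y_i$ are placed on these two variables, with Alice's side holding $x_i$ and Bob's side holding $y_i$. All other variables are fixed to constants: $R_i=0$ for $i \notin W_{\row}$; $S_{ij}=0$ whenever $(i,j)$ is not one of the chosen pairs $(i,j_i)$; for $H_1$ and $Q_V$, every $T_j=0$ (killing all $S_{ij}T_j$ and $R_iT_j$ terms); for $H_0$, $T_{j_i}=1$ for $i \in W_{\row}$ and all other $T_j=0$ (so the triple term $R_iS_{i,j_i}T_{j_i}$ passes through). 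A direct check shows that in each of the three cases the formula evaluates to $\bigvee_{i \in W_{\row}} x_i y_i$.

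The main concern is handling possible collisions when $j_i=j_{i'}$ for distinct split rows, but since the constant I assign to $T_{j_i}$ depends only on whether $j$ is a chosen column index and not on which row selected it, the assignment is always well-defined. With the reduction in place, any communication protocol for the formula under the given partition yields one for set intersection on $|W_{\row}| \geq m/3$ coordinates, forcing the formula's partition communication complexity to be at least $m/3$.
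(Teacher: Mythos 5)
Your proof is correct and follows essentially the same route as the paper's: induce a partition of $[m]\times[m]$ from the split of the $S$-variables, apply Proposition~\ref{W_bound} with $\delta=1/9$ to get $\max(|W_{\row}|,|W_{\col}|)\ge m/3$, then restrict each formula to a set-disjointness instance by keeping one split entry $S_{i,j_i}$ per split row (opposite side from $R_i$) and zeroing everything else, with the $T$-variables set to kill (for $Q_V$, $H_1$) or pass through (for $H_0$) the relevant terms. The only differences are cosmetic: the paper chooses $j_i$ minimal and sets all $T_j=1$ for $H_0$, while you choose $j_i$ arbitrarily and set only the needed $T_{j_i}=1$.
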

\vspace*{-3ex}
\begin{proof}
Let $\bf X$ be the set of variables appearing in $Q_V$ (or $H_1$) and
let $({\bf A},{\bf B})$ be a $(1/3,2/3)$-partition of $\bf X$.
Let $(A,B)$ be the partition of $[m]\times [m]$ induced by $({\bf A},{\bf B})$
and define $W_{\row}$ and $W_{\col}$ as in Proposition~\ref{W_bound}.
Since $|{\bf X}|=m^2+2m$ and only elements of $[m]\times [m]$ are relevant,
$|A|,|B|\ge (m^2+2m)/3 -2m=(1-4/m)m^2/3\ge m^2/9$ for
$m\ge 6$ and hence $\max(|W_{\row}|,|W_{\col}|)\ge m/3$.
We complete the proof by showing that computing $Q_V({\bf A},{\bf B})$
and $H_1({\bf A},{\bf B})$ each
require at least
$\max(|W_{\row}|,|W_{\col}|)$ bits of communication between Alice and Bob.
We will do this by showing that for a particular subset of inputs, $Q_V$ is
equivalent to the disjointness function for a $\max(|W_{\row}|,|W_{\col}|)$
size set.

Suppose without loss of generality that $|W_{\row}| \geq |W_{\col}|$.
Set all $T_j=0$ and for each $i\notin W_{\row}$ set $R_i=0$.
For each $i \in W_{\row}$ for which $R_i \in {\bf A}$, set
all $S_{ij} \in {\bf A}$ to $0$, let $j_i$ be minimal such that
$S_{ij_i}\in {\bf B}$, and set $S_{ij}\in {\bf B}$ to $0$ for all $j>j_i$. 
(Such an index $j_i$ must exist since $i\in W_{\row}$.)
Similarly, For each $i \in W_{\row}$ for which $R_i \in {\bf B}$, set
all $S_{ij} \in {\bf B}$ to $0$, let $j_i$ be minimal such that
$S_{ij_i}\in {\bf A}$, and set $S_{ij}\in {\bf A}$ to $0$ for all $j>j_i$. 
In particular, under this partial assignment, we have\\[-3ex]
\begin{align*}
&Q_V = H_1= \bigvee_{i\in W_{\row}} R_i S_{ij_i}\\[-5ex]
\end{align*}
and for each $i\in W_{\row}$,
Alice holds one of $R_i$ or $S_{ij_i}$ and Bob holds the other.
We can reduce $H_0$ to the same quantity by setting all $T_j=1$.
This is precisely the set disjointness problem on two sets of size $|W_{\row}|$
where membership of $i$ in each player's set is determined by the value of
the unset bit indexed by $i$ that player holds.
Therefore, computing $Q_V$ or $H_1$ requires at
least $|W_{\row}|$ bits of communication, as desired.
\end{proof}

Combining this with Theorem~\ref{maincor}, we immediately obtain the following:

\begin{theorem}
For $m\ge 6$, any SDD representing $Q_V$ or $H_1$ requires size at least
$2^{\sqrt{m/3}-1}$.
\end{theorem}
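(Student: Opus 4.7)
The plan is to combine the two main ingredients already established in the excerpt: the communication-to-SDD-size lower bound of Theorem~\ref{maincor}, and the communication lower bound for $Q_V$ and $H_1$ in Theorem~\ref{QV-H1}. The statement the author gives ("we immediately obtain the following") strongly signals that no new work is required beyond a one-line composition.

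Concretely, I would first invoke Theorem~\ref{QV-H1} to conclude that for $m \geq 6$, both $Q_V$ and $H_1$ have best $(1/3,2/3)$-partition communication complexity at least $C = m/3$. Then I would apply Theorem~\ref{maincor} with this value of $C$: any SDD representing the function has size at least $2^{\sqrt{C}-1} = 2^{\sqrt{m/3}-1}$. Since the hypothesis $m \geq 6$ is exactly the hypothesis required by Theorem~\ref{QV-H1}, no auxiliary bookkeeping is needed.

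There is essentially no obstacle here — the hard work is already done in the two cited theorems. The only care points are (i) making sure the $(1/3,2/3)$-partition notion used in Theorem~\ref{QV-H1} is the same one feeding Theorem~\ref{maincor} (it is, by the definition preceding Theorem~\ref{maincor}), and (ii) noting that Theorem~\ref{maincor} is monotone in $C$, so a lower bound on $C$ translates directly into the stated lower bound on SDD size. The resulting proof is effectively a single sentence, so I would present it as such rather than padding it out.
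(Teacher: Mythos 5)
Your proposal is correct and matches the paper exactly: the paper derives this theorem as an immediate consequence of Theorem~\ref{QV-H1} (communication lower bound of $m/3$) composed with Theorem~\ref{maincor} (size at least $2^{\sqrt{C}-1}$), with no additional argument. Your two care points (matching partition notions and monotonicity in $C$) are valid and harmless observations.
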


As \cite{DBLP:journals/mst/JhaS13} has shown that $Q_V$ has FBDD size $O(m^2)$, we
obtain the following separation.

\begin{corollary}
\label{FBDD<SDD}
FBDDs can be exponentially more succinct than SDDs.  In particular, $Q_V$ has
FBDD size $O(m^2)$ but every SDD for $Q_V$ requires size $2^{\sqrt{m/3}-1}$
for $m\ge 6$.
\end{corollary}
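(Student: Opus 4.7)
The plan is immediate: combine the SDD lower bound just proved (every SDD for $Q_V$ has size at least $2^{\sqrt{m/3}-1}$ for $m\ge 6$) with the FBDD upper bound of $O(m^2)$ for $Q_V$ established in \cite{DBLP:journals/mst/JhaS13}. The two bounds taken together witness the exponential separation, so the only work is to assemble them; the hard content has already been done in the preceding theorem and in the cited paper.

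For a self-contained FBDD construction I would organize the diagram in two phases. Phase one reads all $2m$ unary variables $R_1,\ldots,R_m$ and $T_1,\ldots,T_m$ in a fixed order. At the end of this phase the surviving states can be collapsed to the $O(m^2)$ possibilities that actually influence the rest of the computation, summarized by which pairs $(i,j)$ satisfy $R_i\vee T_j = 1$ (together with an immediate accept state for paths that have already witnessed some $R_i = T_j = 1$). Phase two, starting at each such state, evaluates the restriction of $Q_V$, which reduces to $\bigvee_{(i,j)\colon R_i\vee T_j=1} S_{ij}$. This is a plain OR of a subset of the $S$-variables and is realized by reading each $S_{ij}$ at most once. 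Node sharing across the two phases keeps the total size $O(m^2)$, and the read-once (FBDD) property is trivially preserved, since each variable is read in at most one phase. The careful bookkeeping that caps the node count at $O(m^2)$ is given in \cite{DBLP:journals/mst/JhaS13}; one can also cite it directly.

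There is no real obstacle; the corollary follows from the two bounds with nothing more than the asymptotic comparison of $2^{\sqrt{m/3}-1}$ against the polynomial $O(m^2)$. The only point worth flagging is that this comparison genuinely yields an exponential separation for large $m$, which justifies the phrase ``exponentially more succinct'' in the statement.
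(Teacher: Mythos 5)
Your top-level argument is exactly the paper's: the corollary is obtained by juxtaposing the SDD lower bound $2^{\sqrt{m/3}-1}$ from the preceding theorem with the $O(m^2)$ FBDD upper bound for $Q_V$ cited from \cite{DBLP:journals/mst/JhaS13}, and that part is correct and complete as stated.

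However, the ``self-contained'' two-phase FBDD construction you sketch does not work, and you should either drop it or replace it with the actual Jha--Suciu construction. If every path first reads all $2m$ unary variables in a fixed order, then the node reached at the end of phase one determines the residual function of $Q_V$ on the $S$-variables, and distinct residual functions force distinct nodes. In the non-accepting case (no $i,j$ with $R_i=T_j=1$), say all $T_j=0$, the residual is $\bigvee_{i:R_i=1}\bigvee_j S_{ij}$, and these are pairwise distinct Boolean functions as $\{i: R_i=1\}$ ranges over the $2^m$ subsets of $[m]$ (to separate two sets differing at $i_0$, set $S_{i_0 1}=1$ and all other $S_{ij}=0$). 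So the frontier after phase one needs $2^{\Omega(m)}$ nodes, not $O(m^2)$; your ``collapse'' conflates the $m^2$ individual pairs $(i,j)$ with the exponentially many subsets $\{(i,j): R_i\vee T_j=1\}$ that must be distinguished. This is not a minor bookkeeping issue: any FBDD that reads the unary variables in a fixed common prefix is essentially an OBDD on that prefix, and $Q_V$ is precisely the standard example with no subexponential OBDD. The genuine $O(m^2)$ FBDD of \cite{DBLP:journals/mst/JhaS13} works by interleaving unary and binary variables and using \emph{different} variable orders on the $R_i=0$ and $R_i=1$ branches, which is exactly what the ``free'' in FBDD buys. Since the paper (like your main argument) simply cites that construction, the corollary stands; just do not present the two-phase sketch as an alternative proof of the upper bound.
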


We now consider the formulas $H_{ki}$ above.  Though they seem somewhat
specialized, these formulas are fundamental to UCQ queries:
\cite{DBLP:journals/mst/JhaS13} define the notion of an {\em inversion} in
a UCQ query and use it to characterize the OBDD size of UCQ queries.
In particular they show that if a query $q$ is {\em inversion-free} then the
OBDD size of its lineage $Q$ is linear and if $q$ has an minimum inversion
length $k\ge 1$ then it requires OBDD size $2^{\Omega(n/k)}$ where
$n$ is the domain size of all attributes.
Jha and Suciu obtain this lower bound by analyzing the $H_{ki}$ we defined 
above. 
(We will not define the notion of inversions, or their lengths,
and instead use the definition as a black box. However, as an example, the query associated with
$H_1$ has an inversion of length 1 so its OBDD size is $2^{\Omega(m)}$.)

\begin{proposition}\cite{DBLP:journals/mst/JhaS13}
\label{restrict}
Let $q$ be a query with a length $k \geq 1$ inversion.
Let $D_0$ be the complete bipartite graph of size $m$.
There exists a database $D$ for $q$, along with variable restrictions
$\rho_i$ for all $i \in [0,k]$, such that $|D| = O(|D_0|)$ and
$\Phi^D_q|_{\rho_i} = \Phi^{D_0}_{h_{ki}}=H_{ki}$
\end{proposition}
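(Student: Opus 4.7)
The plan is to unfold the black-box notion of an inversion from Jha--Suciu and construct the required database and restrictions directly from an inversion witness. Recall that an inversion of length $k$ in a UCQ $q$ is a chain of $k+1$ conjunctive subqueries $c_0,\ldots,c_k$ of $q$ together with a pair of valuations of their variables that exhibit the non-hierarchical binding pattern characteristic of an inversion. Informally, each $c_\ell$ carries two distinguished variables whose bindings ``pass'' from $c_\ell$ to $c_{\ell+1}$ in a way that mirrors the shared variables between the $R$-$S^1$, $S^\ell$-$S^{\ell+1}$, and $S^k$-$T$ atoms in the queries $h_{k\ell}$ used to define $H_{k\ell}$.

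First, I would identify from the inversion witness $k+1$ atoms $A_0,\ldots,A_k$ (one per $c_\ell$), each involving a binary relation along the chain, together with the two extremal atoms binding the endpoints to unary relations. Next, I would build $D$ by populating the inversion-chain relations with tuples taken from the complete bipartite graph $D_0$ via the valuation of the witness: the binary relation in $A_\ell$ receives the tuples of $S^\ell_{ij}$ in $H_{k\ell}$, the unary relations at the endpoints receive $R_i$ and $T_j$, and all remaining atoms of $q$ (those not on the chain) are assigned ``filler'' tuples that make their contribution to $\Phi^D_q$ a constant or an absorbable term, such as satisfying them for every relevant binding. Since $q$ has constant size and the chain uses $O(|D_0|)$ tuples per atom, we get $|D| = O(|D_0|)$.

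Then, for each $i \in [0,k]$, I would define $\rho_i$ as the restriction that sets to $1$ the Boolean variables corresponding to the filler tuples introduced above, and sets to $0$ (or to $1$, as appropriate) the variables for tuples belonging to the chain-relations other than the $i$-th. The effect of $\rho_i$ on $\Phi^D_q$ is to kill every disjunct of the lineage coming from a conjunctive subquery of $q$ except those arising from $c_i$, for which the restriction leaves exactly the tuples matching the structure of $h_{ki}$ on $D_0$. After simplification one obtains $\Phi^D_q|_{\rho_i} = H_{ki}$.

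The main obstacle is verifying, using only the inversion property, that the filler assignments really do trivialize every non-chain disjunct of $\Phi^D_q$ without spuriously creating disjuncts that outlive $\rho_i$. This is where the ``non-hierarchical'' nature of the inversion is essential: it guarantees that the chain atoms cannot be simultaneously subsumed into a hierarchical grouping, so that distinct disjuncts of the lineage remain cleanly separated under the restrictions. A careful case analysis, driven by the Jha--Suciu classification of variable co-occurrence inside conjunctive subqueries of $q$, allows one to choose fillers and restrictions that respect this separation and reduce $\Phi^D_q|_{\rho_i}$ exactly to $H_{ki}$.
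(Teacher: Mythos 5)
First, note that the paper does not prove this proposition at all: it is imported verbatim from Jha and Suciu \cite{DBLP:journals/mst/JhaS13}, and the surrounding text explicitly says that the notion of inversion will be used ``as a black box.'' So there is no in-paper proof to compare against; what you are attempting is to reconstruct the Jha--Suciu argument itself.

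As a reconstruction, your outline has the right general shape: an inversion of length $k$ does yield a chain of unifiable atom pairs across (possibly distinct) conjunctive queries of the union, and the construction does populate the relations along that chain with tuples indexed by $[m]\times[m]$ while neutralizing everything else. But there is a genuine gap, and you have located it yourself. The entire technical content of the proposition is the claim that the filler tuples and the restrictions $\rho_i$ can be chosen so that $\Phi^D_q|_{\rho_i}$ equals $H_{ki}$ \emph{exactly}: one must check both that every lineage disjunct not arising from the $i$-th link of the chain is forced to a constant that is absorbed, and that the fillers set to $1$ do not turn some other conjunctive query of the union into a surviving nonconstant term over the chain variables, which would change the function. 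Your proposal names this as ``the main obstacle'' and appeals to ``a careful case analysis, driven by the Jha--Suciu classification,'' but never performs it --- and it cannot be performed at the level of generality you work at, because you never fix a precise definition of inversion (in Jha--Suciu it is phrased via unification and the hierarchy relation between variables of atoms in different disjuncts, not merely an informal ``non-hierarchical binding pattern''). Without that definition the choice of fillers is not determined and the two checks above cannot be carried out. The proposal is therefore an accurate table of contents for the proof rather than a proof; the step that would fail if pressed is the exact-equality claim $\Phi^D_q|_{\rho_i}=H_{ki}$.
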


\vspace*{0.5ex}

\begin{theorem}
Let $k\ge 2$ and assume that $m\ge 6$.
Let $q$ be a query with a length $k \geq 2$ inversion. 
Then there exists a database $D$ for which any SDD for $Q=\Phi^D_{q}$
has size at least $2^{\sqrt{m/k}/3-1}$.
\end{theorem}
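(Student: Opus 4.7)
The plan is to combine Proposition~\ref{restrict} with Theorem~\ref{maincor} via a communication-complexity lower bound. I first invoke Proposition~\ref{restrict} to obtain a database $D$ together with restrictions $\rho_0,\ldots,\rho_k$ satisfying $\Phi^D_q|_{\rho_i}=H_{ki}$ for each $i\in[0,k]$; let $Q=\Phi^D_q$. By Theorem~\ref{maincor}, it suffices to show that the best $(1/3,2/3)$-partition deterministic communication complexity of $Q$ is at least $m/(9k)$, since then Theorem~\ref{maincor} yields an SDD size lower bound of $2^{\sqrt{m/(9k)}-1}=2^{\sqrt{m/k}/3-1}$.

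To lower-bound the communication complexity, I would fix an arbitrary $(1/3,2/3)$-partition $({\bf A},{\bf B})$ of $Q$'s variables and select an index $i^*\in\{0,\ldots,k\}$ together with an extension $\rho$ of $\rho_{i^*}$ (obtained by zeroing out suitable variables of $H_{ki^*}$, as in the proof of Theorem~\ref{qv}) so that $Q|_\rho$ is essentially the set-disjointness function on $\Omega(m/k)$ elements under the induced partition. Since restrictions and variable-zeroings only decrease communication complexity, this gives the desired $\Omega(m/k)$ bound for $Q$ itself. The choice of $i^*$ is made by a pigeonhole/averaging argument across the $k+1$ restrictions: the variables of $Q$ stratify according to the relations $R, S^1,\ldots,S^k, T$, and each $H_{ki}$ involves two adjacent layers, so by averaging the balances of the induced partitions on the variable sets of the $H_{ki}$, at least one $V_{i^*}$ must retain $\Omega(1/k)$-balance. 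Within the chosen $H_{ki^*}$, a Proposition~\ref{W_bound}-style row/column decomposition of the $[m]^2$ index set, analogous to the reduction in Theorem~\ref{qv}, then extracts the required disjointness instance of size $\Omega(m/k)$.

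The main obstacle is the case analysis required when $i^*$ is a middle index ($0<i^*<k$). For middle formulas $H_{k\ell}=\bigvee_{i,j}S^\ell_{ij}S^{\ell+1}_{ij}$, a balanced variable-partition is not by itself sufficient to force high communication complexity: an \emph{aligned} partition, in which for every $(i,j)$ both $S^\ell_{ij}$ and $S^{\ell+1}_{ij}$ lie on the same side, makes each player able to evaluate the formula locally modulo a single bit. The proof must therefore split into cases according to whether the two layers of $V_{i^*}$ are sufficiently misaligned (yielding many ``split'' pairs $(S^{i^*}_{ij},S^{i^*+1}_{ij})$ for a direct disjointness reduction), or else essentially aligned, in which case one falls back on one of the endpoint formulas $H_{k0}$ or $H_{kk}$, whose asymmetry (the unary relation $R$ or $T$ indexed by $[m]$ rather than $[m]^2$ cannot align with the binary partition of $S^1$ or $S^k$) admits the Proposition~\ref{W_bound} row/column argument of Theorem~\ref{qv} directly and supplies the required disjointness instance.
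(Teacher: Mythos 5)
Your proposal is correct and follows essentially the same route as the paper's proof: invoke Proposition~\ref{restrict}, fix a $(1/3,2/3)$-partition, and case-split on whether many stacks $\{S^1_{ij},\ldots,S^k_{ij}\}$ are split between the two players (pigeonhole over the $k-1$ layers to extract a disjointness instance inside some middle $H_{k\ell^*}$) or mostly aligned (fall back on the endpoint formulas $H_{k0}$, $H_{kk}$ via Proposition~\ref{W_bound} exactly as in Theorem~\ref{qv}). The only cosmetic difference is that you lower-bound $\CC(Q({\bf A},{\bf B}))$ directly and invoke Theorem~\ref{maincor}, whereas the paper bounds $\CC(H_{ki}({\bf A},{\bf B}))$ under the induced shell partition and applies Theorem~\ref{mainthm} to the pruned SDD; these are interchangeable since restriction does not increase communication complexity.
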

\vspace*{-2.5ex}
\begin{proof}
Given a query $q$, let $D$ be the database for $q$ constructed in
Proposition~\ref{restrict}.
Fix the vtree $\vtree$ over ${\bf X}_k$ respected by an SDD $\SDD$ for 
$\Phi^D_q$.
By Lemma~\ref{balnode}, there exists a $(1/3,2/3)$ node $b$ in the vtree
$\vtree$ 
that gives a $(1/3,2/3)$ partition $\{{\bf A},{\bf B}\}$ of ${\bf X}_k$. 
By Proposition~\ref{restrict}, there are restrictions $\rho_0,\ldots,\rho_k$
such that $\Phi^D_q|_{\rho_i}=H_{ki}$ for all $i$.  
Thus $\alpha|_{\rho_i}$ is a (pruned) SDD, of
size $\le$ that of $\SDD$, respecting $\vtree|_{\rho_i}$ and computing $H_{ki}$.
Observe that the restriction of $\{{\bf A},{\bf B}\}$ to the variables
of ${\bf X}_{ki}$ is also shell partition of $\vtree|_{\rho_i}$ at node $b$.

We will show that there must exist an $H_{ki}$ for which 
$\CC(H_{ki}({\bf A},{\bf B})) \geq m/(9k)$ and therefore by
Theorem~\ref{yan}, this implies that the unambiguous
communication complexity of $H_{ki}$ is at least $\frac13\sqrt{m/k}-1$
Then by Theorem~\ref{mainthm}, any SDD respecting $\vtree$ that computes 
$H_{ki}$ has size at least $2^{\frac13\sqrt{m/k}-1}$.

Let $W_{\stack}$ contain all pairs $(i,j)$ for which both
${\bf A}\cap \bigcup_{\ell=1}^k \set{S^\ell_{ij}}\ne \emptyset$ and 
${\bf B}\cap \bigcup_{\ell=1}^k \set{S^\ell_{ij}}\ne \emptyset$ and 
Let $\gamma=1/9$.
We will consider two cases: either 
$|W_{\stack}| \geq \gamma\cdot m$ or $|W_{\stack}| < \gamma\cdot m$.

In the first case, since $|W_{\stack}| \geq \gamma\cdot m$, there must exist
at least $\gamma\cdot m$ tuples $(i,j,\ell)$ for which either $S^\ell_{ij}
\in {\bf A}$ and $S^{\ell+1}_{ij} \in {\bf B}$ or vice-versa.
Call the set of these tuples ${\bf T}$. Then, since there are $k-1$ choices of
$\ell<k$, there exists some $\ell^*$ such 
that the set ${\bf T}_{\ell^*} := {\bf T} \cap [m]\times[m]\times \set{\ell^*}$
contains at least $\gamma\cdot m/(k-1)> m/(9k)$ elements. 
If we set all variables of ${\bf X}_{k\ell^*}$ outside of
${\bf T}_{\ell^*}$ to $0$, the function $H_{k\ell^*}$ corresponds to solving a
disjointness problem between Alice and Bob on the elements of
${\bf T}_{\ell^*}$. 
Thus the communication complexity of $H_{k\ell^*}$ under the 
 partition $\{ {\bf A},{\bf B}\}$ is at least $m/(9k)$.

In the second case, consider the largest square submatrix $M$ of
$[m]\times [m]$ that does not contain any member of $W_{\stack}$.  
We mimic the argument of Theorem~\ref{QV-H1} on this submatrix $M$.
By definition, $M$ has side $m'\ge (1-\gamma)m$.
For every $(i,j)$ in $M$, either ${\bf A}$ or ${\bf B}$ contains all
$S^\ell_{ij}$; let $A$ be those $(i,j)$ such that these are in ${\bf A}$ and
$B$ be those $(i,j)$ for which they are in ${\bf B}$.
Since $|{\bf A}|,|{\bf B}|\ge|{\bf X}_k|/3=(km^2+2m)/3$ and there are at most
$2m+(\gamma^2+2\gamma) km^2$ variables not in $M$,\\[-3ex]
\begin{align*}
|A|,|B|&\ge [(km^2+2m)/3-2m+(\gamma^2+2\gamma) km^2]/k\\
&=[(1-\gamma)^2-2/3-4/(3km)]m^2> (m/18)^2)\\[-4ex]
\end{align*}
since $k\ge 2$.
Applying Proposition~\ref{W_bound}, we see that
$\max(|W_{\row}|,|W_{\col}|)\ge m/18\ge m/(9k)$.
By the same argument presented in the proof of Theorem~\ref{qv}, we have both
$\CC(H_{k0}({\bf A},{\bf B})) \geq  |W_{\row}|$ and
$\CC(H_{kk}({\bf A},{\bf B})) \geq  |W_{\col}|$
so at least one of these is at least $m/(9k)$ and the theorem follows.
\end{proof}
\vspace*{-2ex}
It follows that for inversion-free UCQ queries, both SDD and OBDD sizes of
any lineage are linear, while UCQ queries with inversions (of length $k$)
have worse-case lineage size that is exponential ($2^{\Omega(m/k)}$ for OBDDs
and $2^{\Omega(\sqrt{m/k})}$ for SDDs).
Note that the same SDD size lower bound for UCQ query lineage $Q=\Phi^D_q$ 
applies to its dual $Q^*=\Phi^D_{q^*}$ as follows:  Flipping the signs
on the variables in $Q^*$ yields a function equivalent to $\lnot Q$.
So flipping the variable signs at the leaves of an SDD for $Q^*$ we obtain an
SDD of the same size for $\lnot Q$ and hence a deteministic protocol that also
can compute $Q$.

\section{Simulating DNNFs by OR-FBDDs}
\label{dnnf-simulation}

In this section, we extend the simulation of decision-DNNFs by FBDDs 
from~\cite{Beame+13-uai} to obtain a simulation of general DNNFs
by OR-FBDDs with at most a quasipolynomial increase in size. This simulation 
yields lower bounds on DNNF size from OR-FBDD lower bounds. 
This simulation is also tight, since ~\cite{DBLP:journals/corr/Razgon15a,DBLP:conf/kr/Razgon14}
has shown a quasipolynomial separation between the sizes of DNNFs and OR-FBDDs.

\begin{definition}
For each AND node $u$ in a DNNF $\DNNF$, let $M_u$ be the number of AND nodes 
in the subgraph $D_u$. We call $u$'s left child $u_l$ and its right child $u_r$.
We will assume $M_{u_l} \leq M_{u_r}$ (otherwise we swap $u_l$ and $u_r$).

For each AND node $u$, we classify the edge $(u,u_l)$ as a \emph{light edge}
 and the edge $(u,u_r)$ a \emph{heavy edge}. 
 We classify every other edge in $\DNNF$ as a \emph{neutral edge.}

For a DNNF $\DNNF$ or an OR-FBDD $\mathcal{F}$, we denote the functions that
$\DNNF$ and $\mathcal{F}$ compute as $\Phi_{\DNNF}$ and $\Phi_{\mathcal{F}}$.
\end{definition}

\subsection*{Constructing the OR-FBDD}
For a DNNF $\DNNF$, we will treat a leaf labeled by the variable $X$ as a 
decision node that points to a $0$-sink node if $X = 0$ and a $1$-sink node
if $X=1$, and vice-versa for a leaf labeled by $\neg X$. We also assume that
each AND node has just two children, which only affects the DNNF size by at
 most polynomially.

\begin{definition}
Fix a DNNF $\DNNF$. 
For a node $u$ in $\DNNF$ and a path $P$ from the root to $u$, let $S(P)$ be
the set of light edges along $P$ and 
$S(u)=\{S(P) \mid \textrm{P is a path from the root to $u$}\}$.

We will construct an OR-FBDD $\mathcal{F}$ that computes the same boolean
function as $\DNNF$. Its nodes are pairs $(u,s)$ where $u$ is a node 
in $\DNNF$ and the set of light edges $s$ belongs to $S(u)$. 
Its root is $(\textrm{root}(\DNNF), \emptyset).$ 
The edges in $\mathcal{F}$ are of three types:

Type 1: For each light edge $e=(u,v)$ in $\DNNF$ and $s \in S(u)$, add the
edge $((u,s),(v,s \cup \{e\}))$ to $\mathcal{F}$.

Type 2: For each neutral edge $e=(u,v)$ in $\DNNF$ and $s \in S(u)$, add the
edge $((u,s),(v,s))$ to $\mathcal{F}$.

Type 3: For each heavy edge $(u,v_r)$, let $e=(u,v_l)$ be its sibling light 
edge. 
For each $s \in S(u)$ and 1-sink node $w$ in $D_{v_l}$, add the 
edge $((w,s \cup \{e\}),(v_r,s))$ to $\mathcal{F}$.

We label the nodes $u' = (u,s)$ as follows: (1) if u is a decision node
in $\DNNF$ for the variable $X$ then $u'$ is a decision node in $\mathcal{F}$
testing the same variable $X$, (2) if $u$ is an AND-node, then $u'$ is a
no-op node, (3) if $u$ is an OR node it remains an OR node.
(4) if $u$ is a 0-sink node, then $u'$ is a 0-sink node, (5) if $u$
is a 1-sink node, then: if $s= \emptyset$ then $u'$ is a 1-sink node, otherwise
it is a no-op node. 
\end{definition}
We show an example of this construction in Figure~\ref{fig:dnnf}.
\begin{figure}[t]
  \centering 
\includegraphics[scale = 0.4]{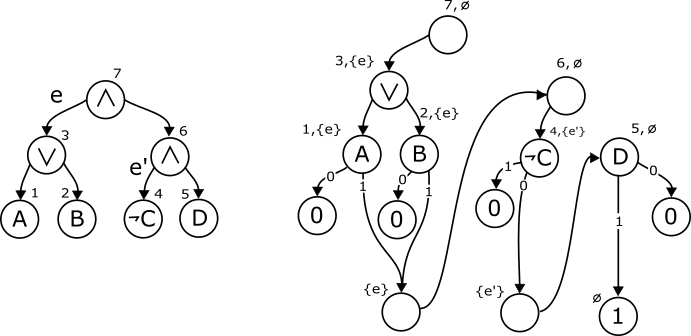}
\caption{A DNNF and our construction of an equivalent OR-FBDD.}
\label{fig:dnnf}
\vspace{-3.1ex}
\end{figure}

\subsection*{Size and Correctness}
\begin{lemma}
For the DNNF $\DNNF$ let $L$ denote the maximum number of light edges from the
root to a leaf, $M$ the number of AND nodes and N the total number of nodes.
Then $\mathcal{F}$ has at most $NM^L$ nodes.
Further, this is $N\cdot 2^{\log^2 N}$.
\end{lemma}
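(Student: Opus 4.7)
The plan is to prove the size bound in two stages: first bound the number of nodes of $\mathcal{F}$ by $N M^L$, and then show $L\le \log_2 N$ so that $M^L\le 2^{\log^2 N}$.

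For the first stage, I would simply enumerate the nodes of $\mathcal{F}$ according to their first coordinate. Each node of $\mathcal{F}$ has the form $(u,s)$ where $u\in \DNNF$ and $s\in S(u)$, so $|\mathcal{F}|=\sum_u |S(u)|$. Every $s\in S(u)$ is a subset of the light edges of $\DNNF$ of size at most $L$ (since any root-to-$u$ path contains at most $L$ light edges). Only AND nodes emit a light edge, so the total number of light edges in $\DNNF$ is at most $M$. Hence $|S(u)|\le \binom{M}{\le L}\le M^L$, giving $|\mathcal{F}|\le N M^L$.

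For the second stage, I would use the heavy-light decomposition idea that motivates the construction. The key structural fact is that for every AND node $u$ with children $u_l$ (light) and $u_r$ (heavy), decomposability forces the AND nodes of $D_{u_l}$ and $D_{u_r}$ to be disjoint: any AND node $v$ reachable from both $u_l$ and $u_r$ would have a variable descendant whose label is reachable from both children of $u$, contradicting the decomposability of $u$. Therefore $M_u\ge 1+M_{u_l}+M_{u_r}$, and since $M_{u_l}\le M_{u_r}$ this gives $M_{u_l}\le (M_u-1)/2$. I would then walk down any root-to-leaf path, listing the AND nodes $u_1,\ldots,u_L$ at which light edges are taken; since each subsequent $u_{i+1}$ lies in $D_{(u_i)_l}$, the inequality $M_{u_{i+1}}\le (M_{u_i}-1)/2$ unrolls to $M_{u_1}\ge 2^L-1$, so $L\le \log_2(M+1)\le \log_2 N$.

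Combining the two bounds yields $N M^L \le N\cdot N^{\log_2 N} = N\cdot 2^{\log^2 N}$, as claimed. The main obstacle is the decomposability argument showing that AND nodes in the two sub-DAGs are disjoint; everything else is routine counting and the standard heavy-path halving. I would take care to handle the degenerate possibility that an AND node has only constant-valued descendants by arguing that such nodes can be eliminated without affecting $N$, $M$, or $L$ (or by simply interpreting "AND node" to mean one whose sub-DAG reads at least one variable).
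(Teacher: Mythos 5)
Your proof is correct and takes essentially the same route as the paper's: first count the pairs $(u,s)$ to get the $NM^L$ bound, then observe that crossing a light edge more than halves the number of descendant AND nodes, so that $M\ge 2^L-1$ and hence $M^L\le 2^{\log^2 N}$. You are in fact slightly more careful than the paper, which asserts the halving without justifying that the AND nodes below the two children of an AND gate are disjoint (your decomposability argument) or addressing AND nodes with no variable descendants.
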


\begin{proof} 
The nodes in $\mathcal{F}$ are labeled $(u,s)$.
There are $N$ possible nodes $u$ and at most $M^L$ choices for the set $s$,
as each path to $u$ has at most $L$ light edges.

Consider a root to leaf path with $L$ light edges.
As we traverse this path, every time we cross a light edge, we decrease the
number of descendant AND nodes by more than half.
Thus we must have begun with more than $2^L$ descendant AND nodes at the root
so that $N\ge M > 2^L$.
This implies that $NM^L$ is quasipolynomial in $N$,

This upper bound is quasipolynomial in $N$, we will show that $M > 2^L$.
Then, since $N \geq M$, $NM^L \leq N 2^{\log^2 M} \leq N 2^{\log^2N}$.
\end{proof}

The proof of the following lemma is in the full paper.

\begin{lemma}
\label{correct-convert}
$\mathcal{F}$ is a correct OR-FBDD with no-op nodes that computes the
same function as $\DNNF$.
\end{lemma}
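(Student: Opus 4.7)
Two things need to be checked: (1) $\mathcal{F}$ is a valid OR-FBDD with no-op nodes, meaning no variable is tested twice on any consistent root-to-sink path; and (2) $\Phi_{\mathcal{F}} = \Phi_{\DNNF}$. The plan hinges on a single invariant about the ``stack'' set $s$ attached to a node $(u,s)$: along any consistent path in $\mathcal{F}$ from $(\text{root}(\DNNF),\emptyset)$ to $(u,s)$, the set $s$ records precisely the light edges whose corresponding heavy siblings still need to be executed. Type~1 edges push a light edge onto $s$, Type~3 edges pop one off (precisely when the computation inside the light subgraph has reached a 1-sink), and Type~2 edges leave $s$ unchanged. I would verify this invariant by a straightforward induction on path length.

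For the read-once property, I would argue that on any root-to-sink path, whenever a light edge $e=(u,v_l)$ is pushed onto the stack and later popped by a Type~3 edge, the variables tested during that interval all lie in $\desc(v_l)$; after the pop, all subsequently tested variables lie in $\desc(v_r)\cup \shell(u)$. Decomposability of $\DNNF$ guarantees $\desc(v_l)\cap\desc(v_r)=\emptyset$, and any variables tested before pushing $e$ lie outside $\desc(u)$. Iterating this localized analysis over the nested push/pop intervals (which form a properly nested LIFO pattern by the invariant) shows that no variable is read twice on any such path.

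For correctness, I would translate between accepting computations of $\DNNF$ and of $\mathcal{F}$ under any assignment $\alpha$. If $\Phi_{\DNNF}(\alpha)=1$, then there is a ``proof sub-DAG'' of $\DNNF$ choosing one satisfying child at each reachable OR node and both children at each reachable AND node. A depth-first walk of this sub-DAG, visiting the light child of every AND node before the heavy child, produces an $\alpha$-consistent path in $\mathcal{F}$ starting from $(\text{root}(\DNNF),\emptyset)$, pushing and popping the stack via Type~1 and Type~3 edges, and ending at $(w,\emptyset)$ for some 1-sink $w$ of $\DNNF$, which by construction is a 1-sink of $\mathcal{F}$. Conversely, any $\alpha$-consistent root-to-1-sink path in $\mathcal{F}$ decodes via the invariant into such a proof sub-DAG: each Type~3 transition certifies that both children of its underlying AND node are satisfied by $\alpha$, and the OR-node choices in $\DNNF$ come directly from the path's choices at the corresponding OR-node copies in $\mathcal{F}$.

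The main obstacle is formulating the invariant on $s$ precisely enough to simultaneously support the local decomposability argument (for the read-once property) and the global AND-closure argument (for correctness). Shared sub-DAGs of $\DNNF$ can be visited along multiple paths with different pending stacks, but tagging each node of $\mathcal{F}$ by the pending stack $s$ is exactly what keeps such visits separate so both invariants go through.
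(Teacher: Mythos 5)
Your plan matches the paper's proof in all essentials: the paper likewise establishes that the sets $s$ evolve as a LIFO stack of pending light edges (its Proposition~\ref{stack}), derives the read-once and acyclicity properties from decomposability combined with that nesting structure, and proves $\Phi_{\mathcal{F}}=\Phi_{\DNNF}$ by translating in both directions between accepting sub-DAGs of $\DNNF$ and accepting paths of $\mathcal{F}$ via a left-to-right (light-child-first) traversal. The only points you gloss over --- acyclicity of $\mathcal{F}$ itself, and the fact that a Type~3 edge by itself certifies only the \emph{light} child of its AND node, the heavy child being certified by the continuation of the path (which is why the paper decodes the accepting path by working backwards from the 1-sink) --- are details your outline would absorb once the stack invariant is proved.
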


\cut{
\begin{lemma}
$\mathcal{F}$ is a correct OR-FBDD with no-op nodes.
\end{lemma}

\begin{proof}We need to show that $\mathcal{F}$ is acyclic and that every 
path reads a variable at most once.
These two properties follow from the lemma:

\begin{lemma}
If $u$ is a leaf node in $\DNNF$ labeled by the variable $X$ and there
exists a non-trivial path (with at least one edge) between the nodes
$(u,s), (v,s')$ in $\mathcal{F}$, then the variable $X$ does not occur
in $\DNNF_v$.
\end{lemma}

This lemma implies that $\mathcal{F}$ is acyclic: a cycle in $\mathcal{F}$
implies a non-trivial path from some node $(u,s)$ to itself, but $X \in \DNNF_u$.
It also implies that every path in $\mathcal{F}$ is read-once: if a path tests
a variable $X$ twice, first at $(u,s)$ and again at $(u_1,s_1)$,
then $X \in D_{u_1}$ contradicting the claim.

To prove the lemma, suppose to the contrary that there exists an OR-FBDD node 
$(u,s)$ 
such that $u$ is a leaf labeled with $X$ and that there exists a path from $(u,s)$ to 
$(v,s')$ in $\mathcal{F}$ such that $X$ occurs in $\DNNF_v$. 
Choose $v$ such that $\DNNF$ is maximal; i.e. there is no path from $(u,s)$ to 
some $(v',s'')$ such that $\DNNF_v \subset \DNNF_{v'}$ and $X$ occurs 
in $\DNNF_{v'}$. 
Consider the last edge on the path from $(u,s)$ to $(v,s')$ in $\mathcal{F}$:
\[ (u,s),...,(w,s''),(v,s').\]

Observe that $(w,v)$ is not an edge in $\DNNF$ since $\DNNF_v$ is maximal, 
and $(u,v)$ is not an edge in $\DNNF$ since $u$ was a leaf. 
Therefore the edge from $(w,s'')$ to $(v,s')$ is Type 3. 
So $\DNNF$ has an AND-node $z$ with children $v_l,v$ and the last path edge 
is of the form $(w,s' \cup \{e\}),(v,s')$ where $e=(z,v_l)$ is the light edge 
of $z$. 
We claim that $e \not\in s$, so that it is not present at the beginning of 
the path. 
If $e \in s$ then, since $s \in S(u)$, we have $u$, which queries $X$, 
in $\DNNF_{v_l}$. 
Together with the assumption that some node in $\DNNF_{v}$ queries $X$, 
we see that descendants of the two children $v_l,v$ of AND-node $z$ query 
the same variable which contradicts that $\DNNF$ is a DNNF. 
On the other hand, $e \in s''$. 
Now, the first node on the path where $e$ was introduced must have an edge of 
the form $(z,s_1),(v_l,s_1 \cup \{e\})$. 
But now we have a path from $(u,s)$ to $(z,s_1)$ with 
$X \in \DNNF_z \supset \DNNF_v$, contradicting the maximality of $v$.
\end{proof}

The next proposition says that on accepting paths
$P = \{(u_1,s_1),(u_2,s_2),\ldots (u_\ell,s_\ell)\}$ in the constructed 
OR-FBDD $\mathcal{F}$, the sequence of sets $(s_1,\ldots,s_\ell)$ behaves like
the sequence of states of a stack. 
We will use this characterization of paths in the proof of Lemma~\ref{samefn}.

\begin{proposition}
Suppose that $\mathcal{F}$ has been constructed from a DNNF $\DNNF$ and that 
$P = \{(u_1,s_1),(u_2,s_2),\ldots (u_\ell,s_\ell)\}$ is a path in $\mathcal{F}$
consistent with a variable assignment $\theta$. 
If, for $j < i$, we have $e_1 \in s_j$, $e_1 \in s_i$, and
$e_2 \in s_i \setminus s_j$, then for no $k > i$ do we have both $e_2 \in s_k$
and $e_1 \not\in s_k$.
\label{stack}
\end{proposition}

\begin{proof}
Suppose the statement is false.
Then there must exist a Type 3 edge
$((w,s \cup \{e_1,e_2\}),(v_r,s \cup \{e_2\}))$ in the path $P$, where 
$w \in v_l$.
However, we cannot have $e_2 \in S(v_r)$: $e_2$ was an edge in $\DNNF_{v_l}$ 
because $(w,s \cup \{e_1,e_2\})$ was reachable in $\mathcal{F}$ meaning
that $e_2 \in S(w)$.
\end{proof}

\begin{lemma}
$\mathcal{F}$ computes the same function as $\DNNF$. 
That is, $\Phi_{\mathcal{F}}[\theta] = \Phi_{\DNNF}[\theta]$ for all variable
 assignments $\theta$.
\label{samefn}
\end{lemma}
\begin{proof}
Suppose that $\Phi_{\mathcal{F}}[\theta]=1$. Then there exists a path $P$ in 
$\mathcal{F}$ consistent with $\theta$ that ends in a 1-sink node. 

If $P=\{(u_1,s_1),(u_2,s_2),\ldots (u_\ell,s_\ell)\}$ has no Type 1 edges then 
it also has no Type 3 edges. Therefore $(u_1,\ldots,u_\ell)$ is a path of 
neutral edges in $\DNNF$ consistent with $\theta$ 
to a 1-sink with no AND-nodes along the way, thus $\Phi_{\DNNF}[\theta]=1$.

Otherwise, let
\[ S = \{(u_i,s_i),(u_{i+1},s_{i+1}),\ldots,(u_{i+j},s_{i+j})\}\]
be a sub-path of $P$. 
We say that $S$ corresponds to an accepting sub-DAG rooted at $u$ if there is a 
sub-DAG of $\DNNF$ rooted at the node $u$ whose OR nodes have fanout 1, AND 
nodes have full fanout, leaves are all 1-sinks under $\theta$, and whose edges 
are $u_i,u_{i+1},\ldots,u_{i+j}$.

Starting from an empty path in $\mathcal{F}$, we will work backwards from the 
end of $P$, adding two possible kinds of sub-path: with all Type 2 edges
removed, the first contains exactly one Type 1 edge followed by one or more 
Type 3 edges.
The second kind, with all Type 2 edges removed, contains only one Type 1 edge 
and no Type 3 edges.
It is possible to construct $P$ using these two types of subpath by 
Proposition~\ref{stack}, which says that
\[ s_1,s_2 \ldots s_\ell\]
is the sequence of states of a stack where, as we traverse the path $P$, its 
Type 1 edges push light edges while its Type 3 edges pop them. As $P$ is an
accepting path, we also have that $s_{\ell} = \emptyset$.
We will show that both of these types of additions give a path corresponding 
to an accepting sub-DAG rooted at all AND nodes mentioned in the Type 1 edges 
of the sub-path.

In the first case, say we add the sub-path $S_h$ to the tail path $S_t$ to 
form $S = S_h S_t$. 
Suppose $S_h$ contains the Type 1 edge $((u_h,s),(v_l,s \cup \{e\}))$, and 
that $S_t$ corresponds to an accepting sub-DAG respecting the first 
AND node in $S_t$, which we call $u_t$. 
We wish to show that $S$ corresponds to an accepting sub-DAG rooted at $u_h$. 
$S_h$ must contain a Type 3 edge popping $e$, hence there is a path in $\DNNF$ 
from $v_l$ to a 1-sink that is consistent with $\theta$. 
Therefore $S_h$ corresponds to an accepting sub-DAG rooted at $v_l$ (there are 
no AND-nodes along the way so the sub-DAG is the path). 
Further, since we can find a path of neutral edges in $\DNNF$ from the sibling 
node of $v_l$, $v_r$, to $u_t$ (these come from the portion of $P$ between the 
Type 3 edge popping $e$ and the Type 1 edge $((u_h,s),(v_l,s \cup \{e\}))$), 
$S$ corresponds to an 
accepting sub-DAG rooted at $v_r$. 
Therefore, $S$ corresponds to an accepting sub-DAG rooted at $u_h$.

In the second case, we add a Type 1 edge $((u_h,s),(v_l,s \cup \{e\}))$ to the 
tail path $S_t$, which corresponds to an accepting sub-DAG rooted at $u_t$, 
the first AND node in $S_t$. Then $u_t$ must appear in $\DNNF_{v_l}$. 
Otherwise the first Type 1 edge in $S_t$ comes after we pop $e$, but then 
$S_t$ began with the Type 3 edge popping $e$. This cannot happen because of 
our inductive assumption that we add sub-paths that begin with a Type 1 edge. 
So $S_t$ gives a path of neutral edges in $\DNNF$ from $v_l$ to $u_t$ (this is 
the sub-path in between the added Type 1 edge and the first Type 1 edge in 
$S_t$). 
Since $S_t$ corresponds to an accepting sub-DAG rooted at $u_t$, it also 
corresponds to an accepting sub-DAG rooted at $v_l$. 
Similarly, $S_t$ gives a neutral edge path in $\DNNF$ from $v_r$ to the first 
AND node mentioned after popping $e$. 
Again, from our inductive hypothesis, $S_t$ thus corresponds to an accepting 
sub-DAG rooted at $v_r$. 
Therefore $S$ corresponds to an accepting sub-DAG rooted at $u_h$.

Now suppose $\Phi_{\DNNF}[\theta]=1$. 
Then $\DNNF$ has an accepting sub-DAG $\DNNF_{\theta}$ respecting $\theta$. 
We can find an accepting path in $\mathcal{F}$ from edges coming from 
$\DNNF_{\theta}$. 
This path will follow a left-to-right traversal of $\DNNF_{\theta}$, 
keeping track of light edges pushed and popped. The Type 1 and Type 2 edge 
portions of this traversal (moving left down the tree) directly translate to 
the appropriate edges in $\mathcal{F}$. The necessary Type 3 edges for this
traversal also exist in $\mathcal{F}$ since $\DNNF_{\theta}$ only has 1-sinks. 
At the end of the traversal we will have popped all light edges and be at a 
1-sink in $\DNNF_{\theta}$ so we will be at a 1-sink for $\mathcal{F}$.
\end{proof}
}

Using the quasipolynomial simulation of DNNFs by OR-FBDDs, we obtain DNNF 
lower bounds from OR-FBDD lower bounds.

\begin{definition}
%The \emph{perfect matching} function $\operatorname{PM}_n$ takes as input a 
%subset of edges $E \subseteq K_{n,n}$ of the complete bipartite graph $K_{n,n}$
 %and outputs $1$ if and only if $E$ admits a perfect matching.
%
%The \emph{exact perfect matching} 
Function $\operatorname{PERM}_n$ takes an 
$n \times n$ boolean matrix $M$ as input and outputs $1$ if and only if $M$ is 
a permutation matrix.
The function $\operatorname{ROW-COL}_n$ takes an $n \times n$ boolean matrix 
$M$ as input and outputs $1$ if and only if $M$ has an all-$0$ row or an 
all-$0$ column.
\end{definition}

\begin{theorem}
Any OR-FBDD computing 
%$\operatorname{PM}_n$, as well as any OR-FBDD computing 
$\operatorname{PERM}_n$ or $\operatorname{ROW-COL}$, must have size $2^{\Omega(n)}$~\cite{wegener2000book}.
\end{theorem}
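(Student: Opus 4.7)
The plan is to leverage the standard connection between OR-FBDD size and the number of combinatorial rectangles needed to cover the $1$-inputs of the computed function, under a suitably chosen partition of the variables. Specifically, I would first show that any OR-FBDD $\mathcal{F}$ of size $s$ implies that, for some nearly balanced partition $({\bf A},{\bf B})$ of the variables, the set $f^{-1}(1)$ can be covered by at most $s^{O(1)}$ combinatorial rectangles of $\set{0,1}^{\bf A}\times \set{0,1}^{\bf B}$. The conversion is similar in spirit to the OBDD-to-protocol simulation in Section~\ref{background}: each accepting path in $\mathcal{F}$ is segmented according to whether it reads an ${\bf A}$-variable or a ${\bf B}$-variable, and the sequence of ``handoff'' nodes along the path is communicated nondeterministically. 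The choice of partition is obtained by a probabilistic argument (pick the partition so that the expected number of handoffs on any accepting path is small) so that the resulting nondeterministic protocol, and hence the rectangle cover, has size polynomial in $s$.

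For $\operatorname{PERM}_n$, I would then apply a counting argument. Under any balanced partition of the $n^2$ matrix entries, a single $1$-rectangle $X\times Y$ can contain at most $2^{O(n)}$ permutation matrices: if $X$ fixes some matrix entries and $Y$ fixes the others, the permutation condition links the two parts so tightly that the number of completions to a full permutation matrix consistent with the rectangle is bounded by a product of matching counts. Since there are $n!=2^{\Omega(n\log n)}$ permutation matrices in total, the number of rectangles required to cover them is $2^{\Omega(n\log n)}$, which forces $s=2^{\Omega(n)}$.

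For $\operatorname{ROW-COL}_n$, I would partition the rows between Alice and Bob so that each holds roughly half of the matrix entries. The ``all-$0$ row'' disjunct is easy to handle; the substantive content is the ``all-$0$ column'' disjunct, which requires Alice and Bob to agree on a common column index $j\in[n]$ at which both their halves are entirely $0$. This is essentially a co-disjointness condition, and a fooling-set argument over the $2^n$ possible indicator vectors of Alice's all-$0$ columns shows that covering these $1$-inputs needs $2^{\Omega(n)}$ rectangles, again giving $s=2^{\Omega(n)}$.

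The main obstacle is the first step: unlike the OBDD case, OR-FBDDs do not fix a global variable order, so the ``cut'' used to segment accepting paths must be chosen with care, and bounding the number of handoff crossings per accepting path requires a probabilistic (or structural) argument over partitions rather than a direct level-by-level cut. Once the rectangle-cover framework is in place, the $\operatorname{PERM}_n$ bound reduces to a clean counting argument, and the $\operatorname{ROW-COL}_n$ bound follows from a reduction to the nondeterministic hardness of a disjointness-like predicate.
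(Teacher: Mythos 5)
The paper does not actually prove this statement; it quotes it from Wegener's monograph, so the relevant comparison is with the standard textbook argument. That argument does \emph{not} use a single global partition: for $\operatorname{PERM}_n$ one cuts each accepting path individually at the point where $\lceil n/2\rceil$ of the $1$-entries have been read, obtaining a cover of the accepted inputs by rectangles each with respect to its \emph{own}, input-dependent balanced partition, and then shows that under \emph{every} such partition a $1$-rectangle contains at most $n!/2^{\Omega(n)}$ permutation matrices. Your step 1 replaces this with one fixed balanced partition, and that replacement is the crux and is false: different accepting paths of an FBDD read the variables in different orders, and under any fixed (or random) balanced partition a path reading $n$ variables alternates between Alice's and Bob's sides $\Theta(n)$ times, so the handoff protocol costs $\Theta(n\log s)$ bits, not $O(\log s)$. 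You flag this as the main obstacle but offer no resolution, and none exists in this form. Your counting for $\operatorname{PERM}_n$ is also quantitatively wrong: under the row partition, the $1$-rectangle ``Alice's rows occupy exactly the columns in $S$, Bob's rows occupy the rest'' contains $((n/2)!)^2 = n!/2^{\Theta(n)}$ permutation matrices, so the cover number is $\binom{n}{\lfloor n/2\rfloor}=2^{\Theta(n)}$ rather than your claimed $2^{\Omega(n\log n)}$; the latter would contradict the $2^{O(n)}$-size OBDD for $\operatorname{PERM}_n$ that reads the matrix row by row. The $2^{\Omega(n)}$ conclusion survives with the corrected count, but only after the $1$-rectangle bound is established for all balanced partitions arising from path cuts, which is the real content of the theorem.

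The $\operatorname{ROW-COL}$ argument has the polarity backwards. Under your row partition, ``some column is all-$0$ in both halves'' is $\bigvee_j a_j b_j$, i.e.\ set \emph{intersection}; its $1$-inputs are covered by the $n$ rectangles ``column $j$ is all $0$,'' and the fooling set $\{(S,\bar S)\}$ certifies hardness of the \emph{disjointness} side, not this one. Worse, every $1$-input of $\operatorname{ROW-COL}_n$ as defined lies in one of $2n$ subcubes (one per row and per column), each of which is a rectangle under \emph{every} partition, so no covering argument on the $1$-side can give a superpolynomial bound; indeed a guess-and-verify OR-FBDD of size $O(n^2)$ (an $\lor$-node branching to $2n$ read-once chains, each verifying that one fixed row or column is all $0$) computes $\operatorname{ROW-COL}_n$ as literally stated, so the exponential lower bound cannot hold for that function. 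The function that is genuinely hard for OR-FBDDs here is the complement, ``every row and every column contains a $1$'': under the row partition, ``every column contains a $1$'' asserts that the set of columns whose top half is all $0$ and the set whose bottom half is all $0$ are \emph{disjoint}, and the disjointness fooling set then yields the $2^{\Omega(n)}$ rectangle bound (again to be carried out for input-dependent partitions). You should either fix this polarity throughout or flag the definition in the statement itself.
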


\cut{
\begin{corollary}
Any OR-FBDD computing $\operatorname{ROW-COL}$ must have size $2^{\Omega(n)}$. 
\end{corollary}

\begin{proof}
Given an OR-FBDD $\mathcal{F}$ of size $s$ computing 
$\operatorname{ROW-COL}_n$, we will find an OR-FBDD of size $(n+2)s$ 
computing $\operatorname{PERM}_n$. Let $V_{\mathcal{F}}$ denote the set of nodes for $\mathcal{F}$.

An $n \times n$ boolean matrix $M$ is a permutation matrix if and only if it 
contains $n$ $1$ entries and has no all-$0$ row or column. We construct an OR-FBDD 
$\mathcal{F}'$ for $\operatorname{PERM}_n$ as follows: the nodes of 
$\mathcal{F}'$ are the set $V_{\mathcal{F}} \times [0,n+1]$. 
This splits each node $v$ from $\mathcal{F}$ into the $n+2$ pieces 
$v_0, \ldots, v_{n+1}$ where each $v_i$ represents that we have seen $i$ 
$1$-entries for $i \leq n$ and $v_{n+1}$ represents that we have seen more than 
$n$ $1$-entries and is a $0$-sink node. 
A node $v_i \in V_{\mathcal{F}'}$ is accepting if and only if $v$ was a 
$0$-sink in $\mathcal{F}$ and $i = n$.
\end{proof}
}

\begin{corollary}
Any DNNF computing 
%one of the functions $\operatorname{PM}_n$, 
$\operatorname{PERM}_n$ or $\operatorname{ROW-COL}$
has size at least $2^{\Omega(\sqrt{n})}$
\end{corollary}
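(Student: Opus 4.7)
The plan is to combine the quasipolynomial simulation lemma (stating that any DNNF on $N$ nodes can be converted into an equivalent OR-FBDD with at most $N \cdot 2^{\log^2 N}$ nodes) with the stated OR-FBDD lower bound of $2^{\Omega(n)}$ for $\operatorname{PERM}_n$ and $\operatorname{ROW-COL}_n$. The corollary then follows by inverting the quasipolynomial blowup.

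More concretely, I would argue by contradiction. Suppose $\DNNF$ is a DNNF of size $N$ computing one of these two functions. First, apply the construction from the previous subsection to obtain an OR-FBDD $\mathcal{F}$ equivalent to $\DNNF$ with at most $N \cdot 2^{\log^2 N}$ nodes (using Lemma~\ref{correct-convert} for correctness and the preceding size lemma). Second, apply the stated OR-FBDD lower bound: since $\mathcal{F}$ computes $\operatorname{PERM}_n$ or $\operatorname{ROW-COL}_n$, its size must be at least $2^{cn}$ for some constant $c>0$.

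Combining these,
\[
N \cdot 2^{\log^2 N} \;\ge\; 2^{cn},
\]
so taking logarithms yields $\log N + \log^2 N \ge cn$, and hence $\log N = \Omega(\sqrt{n})$, i.e.\ $N \ge 2^{\Omega(\sqrt{n})}$, as claimed. No separate argument is needed for $\operatorname{ROW-COL}_n$ versus $\operatorname{PERM}_n$, since the OR-FBDD lower bound is stated for both.

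I do not anticipate any real obstacle: every ingredient is already in place. The only mild subtlety is checking that the OR-FBDD produced by the simulation is of the form to which the lower bound of~\cite{wegener2000book} applies (the construction introduces no-op nodes and $\lor$-nodes, but these do not increase expressive power beyond the OR-FBDD model, and a simple contraction of no-op nodes if desired preserves the size bound up to constants). Once that is noted, the calculation above is immediate.
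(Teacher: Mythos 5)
Your proposal is correct and is exactly the argument the paper intends: compose the $N\cdot 2^{\log^2 N}$ simulation of DNNFs by OR-FBDDs with the $2^{\Omega(n)}$ OR-FBDD lower bound and invert the quasipolynomial blowup to get $N \ge 2^{\Omega(\sqrt{n})}$. The side remark about no-op nodes is a reasonable sanity check and does not affect the bound.
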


\section{Discussion}
\label{discussion}

We have made the first significant progress in understanding the
complexity of general DNNF representations.  We have also provided a new
connection between SDD representations and best-partition communication
complexity.   Best-partition communication complexity is a
standard technique used to derive lower bounds on OBDD size, where it often
yields asymptotically tight results. For communication lower bound $C$,
the lower bound for OBDD size is $2^{C}$ and the lower bound we have shown for
SDD size is $2^{\sqrt C}-1$.  
This is a quasipolynomial difference and matches 
the quasipolynomial separation between OBDD and SDD size shown in~\cite{DBLP:conf/kr/Razgon14}.
Is there always a quasipolynomial simulation
of SDDs by OBDDs in general, matching the quasipolynomial simulation
of decision-DNNFs by FBDDs?  Our separation result shows an example for which
SDDs are sometimes exponentially less concise than FBDDs, and hence
decision-DNNFs also.  Are SDDs ever more concise than decision-DNNFs?

By plugging in the arguments of
\cite{DBLP:conf/aaai/PipatsrisawatD10,pipatsrisawat:dissertation} in place
of Theorem~\ref{mainthm}, all of our lower bounds immediately extend to size
lower bounds for structured deterministic DNNFs (d-DNNFs), of which SDDs are
a special case.
It remains open whether structured d-DNNFs are strictly more concise than SDDs.
\cite{pipatsrisawat08compilation,pipatsrisawat:dissertation}
have proved an exponential separation between
structured d-DNNFs and OBDDs using the {\em Indirect Storage Access (ISA)}
function~\cite{DBLP:journals/tcs/BreitbartHR95}, but the small structured d-DNNF for this function is very far from
an SDD.
It is immediate that, under any variable partition, the $ISA_n$ function has an
$O(\log n)$-bit two-round deterministic communication protocol.  On the
other hand, efficient
one-round (i.e., one-way) communication protocols yield small OBDDs so 
there are two possibilities if SDDs and structured d-DNNFs have different power.
Either 
(1) communication complexity considerations on their own are not enough to
derive a separation between SDDs and structured d-DNNFs,
or 
(2) every SDD can be simulated by an efficient one-way communication protocol,
in which case SDDs can be simulated efficiently by OBDDs (though the ordering
cannot be the same as the natural traversal of the associated
vtree, as shown by~\cite{XueChoiDarwiche12}).

\section*{Acknowledgements}
We thank Dan Suciu and Guy Van den Broeck for helpful comments and suggestions.
\bibliographystyle{apalike}
\bibliography{theory,bib} 
\appendix
\section{Proof of Theorem~\ref{yan}}

Let $f({\bf A},{\bf B})$ be a function with unambiguous communication
complexity $g$ and let $M_f$ be its communication matrix.
Then there exists a set $D$ of $2^g$ disjoint monochromatic rectangles that
cover the $1$'s of $M_f$.

Let $G$ be a graph whose nodes are the rectangles in $D$ and which has an edge
connecting two rectangles if they share some row of $M_f$.
Then each row $r$ of $M_f$ corresponds to a clique $K_r$ containing the
rectangles intersecting $r$.
Similarly, every column $c$ corresponds to an independent set $I_c$ containing
the rectangles intersecting $c$.
For each row $r$ and column $c$, the corresponding entry $M_f(r,c)$ is $1$
if and only if $K_r \cap I_c \not = \emptyset$.
Thus for proving the theorem, it suffices to give a $g^2$ deterministic
protocol for solving the Clique vs Independent set problem on a graph $G$ with
$2^g$ vertices.

The protocol reduces the graph in each step.
Suppose that Alice holds a clique $K$ of an $n$ vertex graph $G$ and Bob
holds an independent set $I$.
In each round Alice sends a node $u \in K$ that is adjacent to fewer than half
the nodes of $G$, or if no such node exists, she notifies Bob. 

If Alice sent the node $u$, then Bob responds with whether (i) $u \in I$,
in which case $K \cap I \not= \emptyset$, or (ii) that $u$ is not adjacent to
any node of $I$, in which case $K \cap I = \emptyset$.
If neither (i) nor (ii) occur then the nodes not adjacent to $u$ are removed
from $G$ as they cannot be in $K$ and the protocol repeats.

Otherwise, if every $u \in K$ is adjacent to over half the nodes of $G$,
Bob sends a node $v \in I$ that is adjacent to at least half the nodes in $G$
if such a $v$ exists.
In this case Alice tells Bob that (i) $v \in K$ so that
$K \cap I \not= \emptyset$, or (ii) $v$ is adjacent to all nodes in $K$ so that
$K \cap I = \emptyset$.
Otherwise, Bob says he has no such $v \in I$ and the nodes adjacent to $v$ are
removed from $G$ and the protocol repeats.

Each iteration of this protocol removes at least half the nodes so that there
are at most $g$ iterations.
The communication per iteration is at most $g+1$ (to either send one of $2^g$
nodes or that no good node exists).

\section{Proof of Lemma~\ref{correct-convert}}

\begin{lemma}
$\mathcal{F}$ is a correct OR-FBDD with no-op nodes.
\end{lemma}

\begin{proof}We need to show that $\mathcal{F}$ is acyclic and that every 
path reads a variable at most once.
These two properties follow from the lemma:

\begin{lemma}
If $u$ is a leaf node in $\DNNF$ labeled by the variable $X$ and there
exists a non-trivial path (with at least one edge) between the nodes
$(u,s), (v,s')$ in $\mathcal{F}$, then the variable $X$ does not occur
in $\DNNF_v$.
\end{lemma}

This lemma implies that $\mathcal{F}$ is acyclic: a cycle in $\mathcal{F}$
implies a non-trivial path from some node $(u,s)$ to itself, but $X \in \DNNF_u$.
It also implies that every path in $\mathcal{F}$ is read-once: if a path tests
a variable $X$ twice, first at $(u,s)$ and again at $(u_1,s_1)$,
then $X \in D_{u_1}$ contradicting the claim.

To prove the lemma, suppose to the contrary that there exists an OR-FBDD node 
$(u,s)$ 
such that $u$ is a leaf labeled with $X$ and that there exists a path from $(u,s)$ to 
$(v,s')$ in $\mathcal{F}$ such that $X$ occurs in $\DNNF_v$. 
Choose $v$ such that $\DNNF$ is maximal; i.e. there is no path from $(u,s)$ to 
some $(v',s'')$ such that $\DNNF_v \subset \DNNF_{v'}$ and $X$ occurs 
in $\DNNF_{v'}$. 
Consider the last edge on the path from $(u,s)$ to $(v,s')$ in $\mathcal{F}$:
\[ (u,s),...,(w,s''),(v,s').\]

Observe that $(w,v)$ is not an edge in $\DNNF$ since $\DNNF_v$ is maximal, 
and $(u,v)$ is not an edge in $\DNNF$ since $u$ was a leaf. 
Therefore the edge from $(w,s'')$ to $(v,s')$ is Type 3. 
So $\DNNF$ has an AND-node $z$ with children $v_l,v$ and the last path edge 
is of the form $(w,s' \cup \{e\}),(v,s')$ where $e=(z,v_l)$ is the light edge 
of $z$. 
We claim that $e \not\in s$, so that it is not present at the beginning of 
the path. 
If $e \in s$ then, since $s \in S(u)$, we have $u$, which queries $X$, 
in $\DNNF_{v_l}$. 
Together with the assumption that some node in $\DNNF_{v}$ queries $X$, 
we see that descendants of the two children $v_l,v$ of AND-node $z$ query 
the same variable which contradicts that $\DNNF$ is a DNNF. 
On the other hand, $e \in s''$. 
Now, the first node on the path where $e$ was introduced must have an edge of 
the form $(z,s_1),(v_l,s_1 \cup \{e\})$. 
But now we have a path from $(u,s)$ to $(z,s_1)$ with 
$X \in \DNNF_z \supset \DNNF_v$, contradicting the maximality of $v$.
\end{proof}

The next proposition says that on accepting paths
$P = \{(u_1,s_1),(u_2,s_2),\ldots (u_\ell,s_\ell)\}$ in the constructed 
OR-FBDD $\mathcal{F}$, the sequence of sets $(s_1,\ldots,s_\ell)$ behaves like
the sequence of states of a stack. 
We will use this characterization of paths in the proof of Lemma~\ref{samefn}.

\begin{sloppypar}
\begin{proposition}
Suppose that $\mathcal{F}$ has been constructed from a DNNF $\DNNF$ and that 
$P = \{(u_1,s_1),(u_2,s_2),\ldots (u_\ell,s_\ell)\}$ is a path in $\mathcal{F}$
consistent with a variable assignment $\theta$. 
If, for $j < i$, we have $e_1 \in s_j$, $e_1 \in s_i$, and
$e_2 \in s_i \setminus s_j$, then for no $k > i$ do we have both $e_2 \in s_k$
and $e_1 \not\in s_k$.
\label{stack}
\end{proposition}
\end{sloppypar}

\begin{proof}
Suppose the statement is false.
Then there must exist a Type 3 edge
$((w,s \cup \{e_1,e_2\}),(v_r,s \cup \{e_2\}))$ in the path $P$, where 
$w \in v_l$.
However, we cannot have $e_2 \in S(v_r)$: $e_2$ was an edge in $\DNNF_{v_l}$ 
because $(w,s \cup \{e_1,e_2\})$ was reachable in $\mathcal{F}$ meaning
that $e_2 \in S(w)$.
\end{proof}

\begin{lemma}
$\mathcal{F}$ computes the same function as $\DNNF$. 
That is, $\Phi_{\mathcal{F}}[\theta] = \Phi_{\DNNF}[\theta]$ for all variable
 assignments $\theta$.
\label{samefn}
\end{lemma}
\begin{proof}
Suppose that $\Phi_{\mathcal{F}}[\theta]=1$. Then there exists a path $P$ in 
$\mathcal{F}$ consistent with $\theta$ that ends in a 1-sink node. 

If $P=\{(u_1,s_1),(u_2,s_2),\ldots (u_\ell,s_\ell)\}$ has no Type 1 edges then 
it also has no Type 3 edges. Therefore $(u_1,\ldots,u_\ell)$ is a path of 
neutral edges in $\DNNF$ consistent with $\theta$ 
to a 1-sink with no AND-nodes along the way, thus $\Phi_{\DNNF}[\theta]=1$.

Otherwise, let
\[ S = \{(u_i,s_i),(u_{i+1},s_{i+1}),\ldots,(u_{i+j},s_{i+j})\}\]
be a sub-path of $P$. 
We say that $S$ corresponds to an accepting sub-DAG rooted at $u$ if there is a 
sub-DAG of $\DNNF$ rooted at the node $u$ whose OR nodes have fanout 1, AND 
nodes have full fanout, leaves are all 1-sinks under $\theta$, and whose edges 
are $u_i,u_{i+1},\ldots,u_{i+j}$.

Starting from an empty path in $\mathcal{F}$, we will work backwards from the 
end of $P$, adding two possible kinds of sub-path: with all Type 2 edges
removed, the first contains exactly one Type 1 edge followed by one or more 
Type 3 edges.
The second kind, with all Type 2 edges removed, contains only one Type 1 edge 
and no Type 3 edges.
It is possible to construct $P$ using these two types of subpath by 
Proposition~\ref{stack}, which says that
\[ s_1,s_2 \ldots s_\ell\]
is the sequence of states of a stack where, as we traverse the path $P$, its 
Type 1 edges push light edges while its Type 3 edges pop them. As $P$ is an
accepting path, we also have that $s_{\ell} = \emptyset$.
We will show that both of these types of additions give a path corresponding 
to an accepting sub-DAG rooted at all AND nodes mentioned in the Type 1 edges 
of the sub-path.

In the first case, say we add the sub-path $S_h$ to the tail path $S_t$ to 
form $S = S_h S_t$. 
Suppose $S_h$ contains the Type 1 edge $((u_h,s),(v_l,s \cup \{e\}))$, and 
that $S_t$ corresponds to an accepting sub-DAG respecting the first 
AND node in $S_t$, which we call $u_t$. 
We wish to show that $S$ corresponds to an accepting sub-DAG rooted at $u_h$. 
$S_h$ must contain a Type 3 edge popping $e$, hence there is a path in $\DNNF$ 
from $v_l$ to a 1-sink that is consistent with $\theta$. 
Therefore $S_h$ corresponds to an accepting sub-DAG rooted at $v_l$ (there are 
no AND-nodes along the way so the sub-DAG is the path). 
Further, since we can find a path of neutral edges in $\DNNF$ from the sibling 
node of $v_l$, $v_r$, to $u_t$ (these come from the portion of $P$ between the 
Type 3 edge popping $e$ and the Type 1 edge $((u_h,s),(v_l,s \cup \{e\}))$), 
$S$ corresponds to an 
accepting sub-DAG rooted at $v_r$. 
Therefore, $S$ corresponds to an accepting sub-DAG rooted at $u_h$.

In the second case, we add a Type 1 edge $((u_h,s),(v_l,s \cup \{e\}))$ to the 
tail path $S_t$, which corresponds to an accepting sub-DAG rooted at $u_t$, 
the first AND node in $S_t$. Then $u_t$ must appear in $\DNNF_{v_l}$. 
Otherwise the first Type 1 edge in $S_t$ comes after we pop $e$, but then 
$S_t$ began with the Type 3 edge popping $e$. This cannot happen because of 
our inductive assumption that we add sub-paths that begin with a Type 1 edge. 
So $S_t$ gives a path of neutral edges in $\DNNF$ from $v_l$ to $u_t$ (this is 
the sub-path in between the added Type 1 edge and the first Type 1 edge in 
$S_t$). 
Since $S_t$ corresponds to an accepting sub-DAG rooted at $u_t$, it also 
corresponds to an accepting sub-DAG rooted at $v_l$. 
Similarly, $S_t$ gives a neutral edge path in $\DNNF$ from $v_r$ to the first 
AND node mentioned after popping $e$. 
Again, from our inductive hypothesis, $S_t$ thus corresponds to an accepting 
sub-DAG rooted at $v_r$. 
Therefore $S$ corresponds to an accepting sub-DAG rooted at $u_h$.

Now suppose $\Phi_{\DNNF}[\theta]=1$. 
Then $\DNNF$ has an accepting sub-DAG $\DNNF_{\theta}$ respecting $\theta$. 
We can find an accepting path in $\mathcal{F}$ from edges coming from 
$\DNNF_{\theta}$. 
This path will follow a left-to-right traversal of $\DNNF_{\theta}$, 
keeping track of light edges pushed and popped. The Type 1 and Type 2 edge 
portions of this traversal (moving left down the tree) directly translate to 
the appropriate edges in $\mathcal{F}$. The necessary Type 3 edges for this
traversal also exist in $\mathcal{F}$ since $\DNNF_{\theta}$ only has 1-sinks. 
At the end of the traversal we will have popped all light edges and be at a 
1-sink in $\DNNF_{\theta}$ so we will be at a 1-sink for $\mathcal{F}$.
\end{proof}

\end{document}